\documentclass[lettersize,journal]{IEEEtran}
\usepackage{amsmath,amsfonts}
\usepackage{graphicx}
\usepackage{float}
\usepackage{algorithm}
\usepackage{array}
\usepackage{textcomp}
\usepackage{stfloats}
\usepackage{url}
\usepackage{verbatim}
\usepackage{graphicx}
\usepackage{cite}
\usepackage{amsmath, amssymb, epsfig, graphicx, bm}
\usepackage{epstopdf}
\usepackage{cases}
\usepackage{float}
\usepackage[justification=centering]{caption}
\usepackage{graphicx}
\usepackage{color}
\usepackage{epstopdf}
\usepackage{amsthm}
\usepackage{amsfonts}
\usepackage{cite, url}
\usepackage[all,cmtip]{xy}
\usepackage{color,hyperref}
\usepackage[noend]{algpseudocode}
\usepackage{flushend}
\usepackage[normalem]{ulem} 


\newtheorem{theorem}{Theorem}
\newcommand{\E}{{\mathbb{E}}}
\newcommand{\R}{{\mathbb{R}}}
\newcommand{\RR}{{\mathcal{R}}}
\newcommand{\B}{{\mathcal{B}}}

\newtheorem{Definition}{Definition}

\newtheorem{assumption}{Assumption}
\newtheorem{lemma}{Lemma}
\newtheorem{remark}{Remark}

\newcommand{\vx}{\bm{x}}

\newcommand{\N}{\mathcal{N}}

\newcommand{\cD}{\mathcal{D}}

\newcommand{\cS}{\mathcal{S}}
\newcommand{\cL}{\mathcal{L}}

\newcommand{\A}{\mathcal{A}}


\newcommand{\lp}{\left(}
\newcommand{\rp}{\right)}

\newcommand{\lnorm}{\left\|}
\newcommand{\rnorm}{\right\|}

\newcommand{\smallCE}[1]{$<$1e-07}

\begin{document}

\title{Generalization Error Matters in Decentralized Learning Under Byzantine Attacks}

\author{Haoxiang Ye and Qing Ling
\thanks{Haoxiang Ye and Qing Ling are with the School of Computer Science and Engineering, Sun Yat-Sen University, Guangzhou, Guangdong, China 510006.
        (e-mail: yehx9@mail2.sysu.edu.cn; lingqing556@mail.sysu.edu.cn)}%
\thanks{Qing Ling (corresponding author) is supported in part by NSF China grants 12126610 \& 62373388, Guangdong Basic and Applied Basic Research Foun- dation grant 2021B1515020094 \& 2023B1515040025, R\&D project of Pazhou Lab (Huangpu) grant 2023K0606, and Guangdong Provincial Key Laboratory of Mathematical Foundations for Artificial Intelligence grant 2023B121201- 0001. A short and preliminary version of this paper has appeared in ICASSP 2024 \cite{ye2024ge}.}

}



\maketitle

\begin{abstract}
Recently, decentralized learning has emerged as a popular peer-to-peer signal and information processing paradigm that enables model training across geographically distributed agents in a scalable manner, without the presence of any central server. When some of the agents are malicious (also termed as Byzantine), resilient decentralized learning algorithms are able to limit the impact of these Byzantine agents without knowing their number and identities, and have guaranteed optimization errors. However, analysis of the generalization errors, which are critical to implementations of the trained models, is still lacking. In this paper, we provide the first analysis of the generalization errors for a class of popular Byzantine-resilient decentralized stochastic gradient descent (DSGD) algorithms. Our theoretical results reveal that the generalization errors cannot be entirely eliminated because of the presence of the Byzantine agents, even if the number of training samples are infinitely large. Numerical experiments are conducted to confirm our theoretical results.
\end{abstract}

\begin{IEEEkeywords}
Decentralized learning, Byzantine-resilience, generalization error
\end{IEEEkeywords}

\section{Introduction}
\label{sec1}

\IEEEPARstart{D}{ue} to the demands of signal and information processing over vast data spreading across geographically distributed devices, distributed learning that respects data privacy has gained considerable attention from both academia and industry in recent years. According to the underlying communication topology, distributed learning can be categorized into two main types: central server-based federated learning \cite{McMahan2016,10180365,Bian2024}
and peer-to-peer decentralized learning \cite{Lian2017,9713700,9802673,chen2024}.
Among them, decentralized learning stands out as it is more scalable to the network size and free of the communication bottleneck caused by the central server. For decentralized learning, decentralized stochastic gradient descent (DSGD) is one of the most popular algorithms \cite{nedic2009distributed}. In DSGD, devices (also termed as agents) run local stochastic gradient descent steps, send the intermediate results to their neighbors while receive their neighbors' intermediate results, and aggregate the results using weighted averaging to yield the new iterates.

However, the message passing in decentralized learning is subject to various uncertainties: packet losses, communication delays, and even malicious attacks. Under such uncertainties, the weighted averaging steps in DSGD become vulnerable. We focus on handling malicious attacks in decentralized learning, and characterize them with the Byzantine attacks model \cite{lamport}. In the Byzantine attacks model, an unknown number of malicious agents (also termed as Byzantine agents) with unknown identities can send arbitrarily malicious messages, instead of the true intermediate results in the context of DSGD, to their neighbors. The main idea of handling Byzantine attacks is to aggregate the received messages with robust aggregation rules other than the vulnerable weighted averaging.

While various robust aggregation rules have been proposed to defend against Byzantine attacks within central server-based federated learning \cite{yin2018byzantine,chen2017distributed,blanchard2017machine,karimireddy2021learning,xia2019faba,rsa,dong2024}, it has been already depicted that many of them exhibit significant performance degradation in decentralized learning, both theoretically and empirically \cite{wu2022byzantine}. Provably effective robust aggregation rules for decentralized learning include trimmed mean (TM) \cite{fang2022bridge}, iterative outlier scissor (IOS) \cite{wu2022byzantine} and self centered clipping (SCC)\cite{he2022byzantine}.
The optimization errors of Byzantine-resilient DSGD algorithms, which replace weighted averaging with these robust aggregation rules, are well-studied. Theoretical analyses have shown that the Byzantine-resilient DSGD algorithms can converge to bounded neighborhoods of an optimal solution \cite{wu2022byzantine,fang2022bridge,he2022byzantine,Ye2023}.

Despite the fruitful theoretical and empirical results on the optimization errors of Byzantine-resilient DSGD algorithms \cite{wu2022byzantine,fang2022bridge,he2022byzantine,Ye2023}, investigation of their generalization errors remains absent. The generalization error reflects the applicability of a trained model in the real world and is crucial for a learning algorithm. Some recent works have analyzed the generalization errors of attack-free DSGD algorithms \cite{sun2021stability,deng2023stability,zhu2022topology,bars2023improved,richards2020graph,taheri2023generalization} within the framework of algorithmic stability \cite{bousquet2002stability,hardt2016train}. However, these works do not take into account the presence of Byzantine attacks in their analyses. The generalization error of Byzantine-resilient decentralized learning remains an open question.

\subsection{Contributions}
Our contributions are summarized as follows.
\begin{itemize}

\item \textbf{First Analysis of Generalization Error for Byzantine-Resilient DSGD:} We provide the first-ever generalization error analysis for Byzantine-resilient decentralized learning. Our theoretical results reveal the negative impact of Byzantine attacks on the generalization error. Particularly, in the presence of Byzantine agents, the generalization error cannot vanish, even in an ideal scenario with an infinitely large number of training samples.

\item \textbf{Validation with Numerical Experiments:} To validate these theoretical findings, we conduct numerical experiments to explore the generalization abilities of Byzantine-resilient DSGD with different robust aggregation rules. The numerical results align with the conclusions drawn from our theoretical findings.
\end{itemize}
\subsection{Related works}
\noindent\textbf{Byzantine-Resilience.} As a class of the most critical threats to distributed systems \cite{lamport}, Byzantine attacks significantly dis- courage the implementation of distributed learning. To defend against Byzantine attacks, a common strategy is using robust aggregation rules to replace the vulnerable mean or weighted mean aggregation rules, whether in sever-based federated lear- ning or in decentralized learning.

In server-based federated learning, numerous robust aggregation rules have been proposed, including those discussed in \cite{yin2018byzantine,chen2017distributed,blanchard2017machine,karimireddy2021learning,xia2019faba,rsa,dong2024}.
However, some of them may fail in decentralized learning, as highlighted in \cite{wu2022byzantine}. Robust aggregation rules with guaranteed effectiveness for decentralized learning include TM \cite{fang2022bridge}, IOS \cite{wu2022byzantine} and SCC \cite{he2022byzantine}. In TM each non-Byzantine agent first
removes a number of the largest and smallest elements in each dimension of the received messages, and then averages the rest. In IOS each non-Byzantine agent maintains a trusted set of the received messages, iteratively removes messages that significantly deviate from the set's average, and then takes weighted averaging on the rest. In SCC each non-Byzantine agent does not remove any received messages, but applies an clipping operator to limit the influence of those faraway from its own message, and then takes weighted averaging.

\noindent\textbf{Generalization Error.}
Conventional analyses of learning algorithms typically focus on the optimization errors during the training processes. However, it is also of practical importance to consider the generalization errors of the trained models on unseen testing samples. Popular technical tools for analyzing the generalization errors include
Rademacher complexity \cite{sachs2023generalization}, probably approximately correct (PAC) learnability \cite{london2017pac,yang2019pac}, as well as those from information-theoretic perspectives \cite{xu2017information}. Another notable technical tool is algorithmic stability, which characterizes the change of the trained model when a single training sample is substituted. The concept of uniform stability, initially introduced by \cite{bousquet2002stability}, is subsequently applied by \cite{hardt2016train} to explore the generalization error of stochastic gradient descent (SGD). The work of \cite{zhang2022stability} establishes a lower bound of uniform stability and asserts that the upper bounds obtained in \cite{hardt2016train} are tight when the loss is convex.

The works of \cite{sun2021stability,deng2023stability,zhu2022topology,bars2023improved,taheri2023generalization,richards2020graph} have extended the above theoretical results from SGD to DSGD, thereby helping understand the generalization error in the context of decentralized learning.
Among these works, \cite{sun2021stability} and \cite{deng2023stability} analyze the generalization errors of synchronous and asynchronous DSGD, respectively. The impact of the communication topology on the generalization error of DSGD is investigated in \cite{zhu2022topology}.
The generalization error is improved in \cite{bars2023improved,richards2020graph}, matching the bound established for single-agent algorithms.
The work of \cite{taheri2023generalization} investigates the generalization error of DSGD when the training samples are
linearly separable. However, none of them considers the generalization error of Byzantine-resilient decentralized learning.

Compared to the short and preliminary conference version \cite{ye2024ge}, this paper has been extensively revised. The conference version only includes the generalization error analysis when the loss is strongly convex, whereas this paper has expanded the scope to include both convex and non-convex losses. Also,

\noindent we have included detailed proofs that highlight the challenges in analyzing the generalization error of Byzantine-resilient decentralized learning.
Last but not least, we have augmented this paper with additional numerical experiments and discussions on the numerical results.

\subsection{Paper Organization}
The rest of this paper is organized as follows. The problem of Byzantine-resilient decentralized learning is formulated in Section \ref{sec2}, taking into account the generalization error issue. A generic Byzantine-resilient decentralized SGD framework is reviewed in Section \ref{sec3}. We introduce the assumptions and the technical tool of uniform stability in Section \ref{sec-pre}, and present the derived generalization error bounds in Section \ref{sec-ge}. Numerical experiments are given in Section \ref{sec-num}, and conclusions are made in Section \ref{sec-con}.

\section{Problem Statement}
\label{sec2}

We consider a fundamental signal and information processing task of training a model with a set of agents over a peer-to-peer communication network. The network is described by an undirected and connected graph, denoted as $\mathcal{G}=(\mathcal{N},\mathcal{E})$, where $\mathcal{N}=\left\{1,...,N\right\}$ is the set of agents and $\mathcal{E}$ is the set of edges. An edge $e = (m, n) \in \mathcal{E}$ stands for an undirected communication link between agents $m$ and $n$, enabling the two neighbors to transmit messages to each other. Some of the agents, whose number and identities are unknown, will transmit arbitrarily malicious messages during the learning process. We also call them as Byzantine agents \cite{lamport}. We define $\mathcal{R}$ as the set of non-Byzantine agents and $\mathcal{B}$ as the set of Byzantine agents, with $\N = \mathcal{R} \cup \mathcal{B}$. Each non-Byzantine agent $n$ draws training samples from a local data distribution denoted as $\cD_n$, with respect to a random variable $\xi_n \sim \cD_n$. The goal is to train a global model $\vx^* \in \R^d$ that minimizes the population risk defined as
\begin{equation}\label{1}
    F(\vx) := \frac{1}{|\RR|}\sum\limits_{n \in \RR}\mathbb{E}_{\xi_n \sim \cD_n } f\left(\vx; \xi_n\right),
\end{equation}
in which $f\left(\vx; \xi_n\right)$ is the loss of $\vx \in \R^d$ with respect to $\xi_n$. Note that we should not include the Byzantine agents in \eqref{1} since being able to transmit arbitrarily malicious messages during the learning process implies the ability of arbitrarily manipulating their training samples.

Since the true data distributions $\mathcal{D}_n$ are typically unknown, a common approach is to minimize the empirical risk over the union of the non-Byzantine agents' local datasets, denoted as $\cS := \cup_{n \in \mathcal{R}} \cS_{n}$, where $\cS_n$ $=\{\xi_{n,1},\cdots,\xi_{n,Z}\}$ is the local dataset of non-Byzantine agent $n \in \RR$ with size $Z$, as well as independently and identically sampled from $\cD_n$. Without loss of generality, we assume that all the local datasets are with the same size. The global empirical risk averages the local empirical risks of the non-Byzantine agents, in the form of
\begin{align}\label{erm}
F_{\cS}(\vx) \!& :=\! \frac{1}{|\RR|} \! \sum\limits_{n \in \RR} F_{\mathcal{S}_n}\!\left(\vx\right), \\
\text{with} ~ F_{\mathcal{S}_n}\!\left(\vx\right) \!&:=\! \frac{1}{Z} \! \sum_{i=1}^{Z}f\left(\vx; \xi_{n,i}\right). \notag
\end{align}
For later usage, we denote $\vx_{\cS}^*  \in \R^d$ as the minimizer of $F_{\cS}$.

Let us consider a stochastic decentralized learning algorithm $\cL$ applied to dataset $\cS$, yielding a model $\cL(\cS)$. The expected excess risk \cite{bottou2007tradeoffs} that characterizes the performance degradation of $\cL(\cS)$ compared to $\vx^*$, the minimizer of the population risk \eqref{1}, is given by
  \begin{equation*}
  \begin{aligned}
  &\E_{\cS,\mathcal{L}}[F(\mathcal{L}(\cS))-F({\vx}^*)]=\underbrace{\E_{\cS,\mathcal{L}}[F(\mathcal{L}(\cS))-F_{\cS}(\mathcal{L}(\cS))]}_{\textrm{generalization  error}}\\
  &+\underbrace{\E_{\cS,\mathcal{L}}[F_{\cS}(\mathcal{L}(\cS))-F_{\cS}(\vx^*_{\cS})]}_{\textrm{optimization error}}+\underbrace{\E_{\cS,\mathcal{L}}[F_{\cS}(\vx^*_{\cS})-F({\vx}^*)]}_{\textrm{$\leq 0$}}.
  \end{aligned}
  \end{equation*}
The last term is non-positive because: (i) $F_{\cS}({\vx}_{\cS}^*) \leq F_{\cS}({\vx}^*)$ as ${\vx}_{\cS}^*$ is the minimizer of $F_{\cS}$; (ii)
$\E_{\cS,\mathcal{L}} [F({\vx}^*)] = \E_{\cS,\mathcal{L}} [F_{\cS}({\vx}^*)]$ as $\cS_{n}$ is independently and identically sampled from $\cD_n$. Thus, the expected excess risk is upper-bounded by the summation of generalization and optimization errors. Although the optimization errors of Byzantine-resilient DSGD algorithms have been extensively studied \cite{wu2022byzantine,fang2022bridge,he2022byzantine,Ye2023}, the generalization errors still lack investigation. In the subsequent section, we provide the first-ever analysis of the generalization errors for a class of Byzantine-resilient DSGD algorithms.

\section{Byzantine-resilient DSGD}
\label{sec3}
We start from reviewing the conventional attack-free DSGD. At time $k$, each agent $n$ holds a local model $\vx^{k}_n \in \R^d$, draws a training sample $\xi_n^{k}$ from the local dataset $\cS_n$, calculates a stochastic gradient $\nabla f(\vx^{k}_n; \xi_n^{k})$, and runs a stochastic gradient descent step $\vx_n^{k+\frac{1}{2}} = \vx^k_n - \alpha^{k} \nabla f(\vx^{k}_n; \xi_n^{k})$, where $\alpha^k > 0$ is a step size. Then, it transmits $\vx_{n,m}^{k+\frac{1}{2}} = \vx_n^{k+\frac{1}{2}}$ to all neighbors $m \in \mathcal{N}_n$, in which $\mathcal{N}_n$ denotes the set of agent $n$'s neighbors. Upon receiving $\vx_{m,n}^{k+\frac{1}{2}}$ from all neighbors $m \in \mathcal{N}_n$, each agent $n$ undertakes a weighted averaging step $\vx^{k+1}_n=\sum_{m \in \N_n\cup\{n\}}$ $w_{nm}^\prime \vx_{m,n}^{k+\frac{1}{2}}$, within which the weights $w_{nm}^\prime \in [0,1]$ satisfies $\sum_{m\in \N_n \cup \{n\}}w_{nm}^\prime=1$.

However, at the presence of Byzantine agents, the weighted averaging step becomes highly vulnerable. Denote $\mathcal{R}_n$ and $\mathcal{B}_n$ as the sets of agent $n$'s non-Byzantine and Byzantine neighbors, respectively. Thus, $\vx^{k+1}_n=\sum_{m \in \mathcal{R}_n \cup \B_n \cup\{n\}}w_{nm}^\prime \vx_{m,n}^{k+\frac{1}{2}}$ can be arbitrary even when there is only one Byzantine agent. To counteract Byzantine attacks, Byzantine-resilient DSGD replaces the weighted averaging step by a robust aggregation rule. Denoting $\mathcal{A}_n$ as the robust aggregation rule used by non-Byzantine agent $n \in \mathcal{R}$, we outline Byzantine-resilient DSGD in Algorithm \ref{robust-DSGD}.

Following \cite{wu2022byzantine}, we investigate a class of robust aggregation rules $\{\mathcal{A}_n\}_{n\in \RR}$ that, for each non-Byzantine agent $n\in \RR$, are able to approximate a ``proper'' weighted average of non-Byzantine neighbors' local models and its own. The associated weights and the approximation ability are characterized as follows. Without loss of generality, we assume that the non-Byzantine agents are numbered from $1$ to $|\RR|$ thereafter.


\begin{Definition}[Virtual mixing matrix and contraction constant corresponding to $\{\mathcal{A}_n\}_{n \in \RR}$]
    \label{definition:mixing-matrix}
    Let us consider a matrix $W \!\in\! \mathbb{R}^{\vert \RR \vert \times \vert \RR \vert}$ whose $(n,m)$-th entry $w_{nm} \in [0, 1]$
    if $m \in \RR_n \cup\{n\}$ and $w_{nm} = 0$ if $m \in \RR$ but $m \notin \RR_n \cup\{n\}$, for $n \in \RR$.
    Further, $\sum_{m\in \RR_n \cup \{n\}}w_{nm}=1$. Define $\hat \vx_n := \sum_{m \in \RR_n\cup\{n\}}$ $w_{nm} \vx_{m,n}$. If there exists a constant $\rho \geq 0$ for any $n \in \RR$ such that
    \begin{align}
        \label{inequality:robustness-of-aggregation-local}
              & \|\mathcal{A}_n (\vx_n, \{\vx_{m,n}\}_{m\in \RR_n\cup \mathcal{B}_n} )-\hat \vx_n \| \\
         \leq & \rho \max_{m \in \RR_n \cup \{n\}}\|\vx_{m} - \hat \vx_n\|, \notag
    \end{align}
    then $W$ is the virtual mixing matrix and $\rho$ is the contraction constant associated with the robust aggregation rules $\{\mathcal{A}_n\}_{n\in \RR}$.
\end{Definition}

The popular robust aggregation rules, such as TM, IOS and SCC, all satisfy Definition \ref{definition:mixing-matrix}.
Their virtual mixing matrices and contraction constants have been analyzed in \cite{Ye2023}. Note that the contraction constant is usually nonzero, and the virtual mixing matrix is not necessarily doubly stochastic, but only row sto- chastic. For later usage, we introduce $\chi^2 := \frac{1}{\vert \RR \vert}\|W^\top \bm{1}- \bm{1} \|^2$ to quantify the skewness of $W$, with $\bm{1}$ being a $|\mathcal{R}|$-dimensional all-one vector.

\begin{algorithm}[t]
    \caption{Byzantine-Resilient DSGD}
    \label{robust-DSGD}
        \textbf{Input:} Initializations $\vx^0$ for all $n \in \RR$; step size $\alpha^k > 0$;
        \begin{algorithmic}[1]
        \ForAll {$k = 0, 1, 2, \cdots$}
        \ForAll {non-Byzantine agents $n \in \RR$}
        \State Compute stochastic gradient $\nabla f(\vx^{k}_n; \xi_n^{k})$
        \State Compute $\vx_n^{k+\frac{1}{2}} = \vx^k_n - \alpha^{k} \nabla f(\vx^{k}_n; \xi_n^{k})$
        \State Send $\vx_{n,m}^{k+\frac{1}{2}} = \vx_n^{k+\frac{1}{2}}$ to all neighbors $m \in \RR_n\cup \B_n$
        \State Receive $\vx_{m,n}^{k+\frac{1}{2}}$ from all neighbors $m \in \RR_n\cup \B_n$
        \State Aggregate $\vx^{k+1}_n= \mathcal{A}_n (\vx_n^{k+\frac{1}{2}}, \{ \vx_{m,n}^{k+\frac{1}{2}}\}_{m\in \RR_n\cup \B_n})$
        \EndFor
        \ForAll {Byzantine agents $n\in \B$}
        \State Send $\vx^{k+\frac{1}{2}}_{n,m}=*$ to all neighbors $m \in \RR_n\cup \B_n$
        \EndFor
        \EndFor
        \end{algorithmic}
\end{algorithm}

\section{Preliminaries}
\label{sec-pre}

This section outlines the assumptions made for the analysis, and introduces the technical tool of uniform stability.
\subsection{Assumptions}
We make several assumptions on the peer-to-peer communication network, the loss and the sampling process.

\begin{assumption}[Network connectivity]
    \label{assumption:connection}
    The subgraph comprised of all non-Byzantine agents $n \in \RR$ is connected. That is, there exists at least one undirected path between any pair $n, m\in \RR$. In addition, for for the virtual mixing matrix $W$, it holds that $\beta := 1-\|(I-\frac{1}{\vert \RR \vert}\bm{1}\bm{1}^\top)W\|^2 \in (0,1]$, with $\|\cdot\|$ being the matrix spectral norm.
\end{assumption}

Assuming network connectivity is necessary in analyzing Byzantine-resilient decentralized algorithms \cite{wu2022byzantine,fang2022bridge,he2022byzantine,Ye2023}. Without this assumption, in the presence of Byzantine attacks, non-Byzantine agents belonging to different subgraphs are only able to learn their ``isolated''  models, and reaching consensus is not guaranteed.



\begin{assumption}[Smoothness]  \label{assumption:Lip}
 The loss $f(\vx; \xi)$ is $L$-smooth.
\end{assumption}

\begin{assumption}[Bounded stochastic gradient]
	\label{assumption:gradients}
	The stochastic gradients $\nabla f(\vx_n^k;\xi_n^k)$ are upper-bounded by $M$ in the $\ell_2$-norm, over all times $k=0, 1, \cdots$ and across all non-Byzantine agents $n \in \RR$.
\end{assumption}

\begin{assumption}[Uniform and independent sampling]
	\label{assumption:indSampling}
    The trai- ning sample $\xi_n^{k}$ is uniformly and independently drawn from the local dataset $\cS_n$, over all times $k=0, 1, \cdots$ and across all non-Byzantine agents $n \in \RR$.	
\end{assumption}

These assumptions are common in analyzing generalization errors of stochastic algorithms \cite{sun2021stability,deng2023stability,bars2023improved,hardt2016train}.
Note that Assumptions \ref{assumption:Lip} and \ref{assumption:gradients} can be relaxed. For example, they are relaxed to assuming H$\ddot{o}$lder continuous stochastic gradient in \cite{zhu2022topology}, thereby covering a class of functions positioned between those with smoothness and those with bounded stochastic gradients. We leave the investigation under relaxed assumptions as our future work.


\subsection{Uniform Stability}
Following the prior works \cite{sun2021stability,deng2023stability,bars2023improved}, we consider the generalization error of the average model at a given time $k$. In the context of Byzantine-resilient decentralized learning, the output $\mathcal{L}(\cS)$ of a stochastic algorithm $\mathcal{L}$ over a dataset $\cS$ at a given time $k$ is $\bar\vx^k := \frac{1}{|\RR|}\sum_{n \in \RR} \vx_n^k$, the average of all non-Byzantine agents' local models. The generalization error is defined for such $\mathcal{L}(\cS)$, with respect to discrepancy between the population risk in \eqref{1} and the global empirical risk in \eqref{erm}.

Our analysis of the generalization error relies on a popular tool named \textit{uniform stability} \cite{bousquet2002stability,hardt2016train}.

\begin{Definition}[Uniform stability \cite{bousquet2002stability}]
\label{def-sta}
A stochastic algorithm $\mathcal{L}$ is $\epsilon$-uniformly stable if for any two datasets $\cS$ and $\cS'$ of size $Z |\RR|$ that only differ in one training sample, we have
\begin{equation*}
\sup_{\xi} \E_{\mathcal{L}} \left[ f(\mathcal{L}(\cS); \xi) - f(\mathcal{L}(\cS'); \xi) \right] \le \epsilon,
\end{equation*}
where the training samples are drawn from $\cS$ and $\cS'$ in the same order and the expectation $\E_{\mathcal{L}}$ is taken over the random- ness of $\mathcal{L}$.
\end{Definition}

If a stochastic algorithm $\mathcal{L}$ is uniformly stable, then adding or removing a single training sample from the dataset does not matter too much to the trained model. This fact also implies that the trained model is not sensitive, and hence has a small generalization error. The connection between uniform  stability and generalization error is established in the following lemma.
 \begin{lemma}[Generalization error via uniform stability  \cite{hardt2016train}]
 \label{lemma-sta}
If a stochastic algorithm $\mathcal{L}$ is $\epsilon$-uniformly stable, then its generalization error satisfies
$
| \E_{\cS,\mathcal{L}}[F(\mathcal{L}(\cS))-F_{\cS}(\mathcal{L}(\cS))] |\leq \epsilon
$.
\end{lemma}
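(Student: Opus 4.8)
The plan is to adapt the classical ghost-sample (symmetrization) argument of \cite{hardt2016train} to the multi-agent empirical risk in \eqref{erm}. First I would introduce an independent ``ghost'' dataset $\tilde\cS = \cup_{n\in\RR}\{\tilde\xi_{n,1},\dots,\tilde\xi_{n,Z}\}$ with each $\tilde\xi_{n,i}$ drawn i.i.d.\ from $\cD_n$, independently of $\cS$ and of the internal randomness of $\cL$. Since the population risk in \eqref{1} equals the expected empirical risk over fresh samples, and since $\tilde\cS$ is independent of the output $\cL(\cS)$, I can rewrite the population-risk term as
\begin{equation*}
\E_{\cS,\cL}[F(\cL(\cS))] = \E_{\cS,\tilde\cS,\cL}\left[\frac{1}{Z|\RR|}\sum_{n\in\RR}\sum_{i=1}^{Z} f(\cL(\cS);\tilde\xi_{n,i})\right],
\end{equation*}
whereas by definition $\E_{\cS,\cL}[F_{\cS}(\cL(\cS))] = \E_{\cS,\cL}[\frac{1}{Z|\RR|}\sum_{n,i} f(\cL(\cS);\xi_{n,i})]$. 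Subtracting, the generalization error becomes a double average over the agent--sample pairs $(n,i)$ of the gap between evaluating $\cL(\cS)$ on a held-out ghost sample and on a training sample.

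Next I would apply a relabeling step. For each pair $(n,i)$, let $\cS^{(n,i)}$ denote the dataset obtained from $\cS$ by replacing its $(n,i)$-th entry $\xi_{n,i}$ with the ghost sample $\tilde\xi_{n,i}$ and leaving all other entries untouched. Because $\xi_{n,i}$ and $\tilde\xi_{n,i}$ are exchangeable (both i.i.d.\ from $\cD_n$) and $\cL(\cS)$ is a measurable function of $\cS$ together with sample-independent randomness, swapping the names of these two samples gives the identity $\E_{\cS,\tilde\cS,\cL}[f(\cL(\cS);\tilde\xi_{n,i})] = \E_{\cS,\tilde\cS,\cL}[f(\cL(\cS^{(n,i)});\xi_{n,i})]$. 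Consequently,
\begin{equation*}
\E_{\cS,\cL}[F(\cL(\cS))-F_{\cS}(\cL(\cS))] = \frac{1}{Z|\RR|}\sum_{n\in\RR}\sum_{i=1}^{Z}\E\left[f(\cL(\cS^{(n,i)});\xi_{n,i}) - f(\cL(\cS);\xi_{n,i})\right].
\end{equation*}
Now $\cS$ and $\cS^{(n,i)}$ differ in exactly one training sample, so Definition \ref{def-sta} applies: coupling the internal randomness of $\cL$ across the two datasets and treating the realized value $\xi_{n,i}$ as the test point covered by the supremum, each summand is at most $\epsilon$. Averaging over the $Z|\RR|$ pairs yields the upper bound $\epsilon$; running the same argument with the roles of $\cS$ and $\cS^{(n,i)}$ reversed gives the matching lower bound $-\epsilon$, so the absolute value is bounded by $\epsilon$.

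The step I expect to require the most care is the relabeling identity, since it is where the decentralized, multi-agent structure enters. I must ensure that exchanging $\xi_{n,i}$ and $\tilde\xi_{n,i}$ remains valid even though $\cL(\cS) = \bar\vx^k$ is produced by the coupled Byzantine-resilient dynamics of Algorithm \ref{robust-DSGD} across all agents: the key is that the sampling sequence is coupled in the same order for $\cS$ and $\cS^{(n,i)}$ (as required by Definition \ref{def-sta} and guaranteed by Assumption \ref{assumption:indSampling}), and that the Byzantine messages and the aggregation maps $\mathcal{A}_n$ are absorbed into the fixed, sample-independent randomness over which $\E_{\cL}$ is taken. A second subtlety is that the held-out test point $\xi_{n,i}$ coincides with the swapped coordinate; this is legitimate precisely because Definition \ref{def-sta} takes a supremum over all test samples $\xi$, so the particular realized value is admissible. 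Beyond these points the argument is bookkeeping over the double index $(n,i)$ and invokes neither smoothness, the contraction constant $\rho$, nor the mixing matrix $W$ --- those enter only later when $\epsilon$ itself is bounded.
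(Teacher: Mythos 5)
Your proof is correct: the ghost-sample symmetrization, the single-swap relabeling identity, and the application of Definition~\ref{def-sta} to the pair $(\cS,\cS^{(n,i)})$ (with the realized $\xi_{n,i}$ covered by the supremum) constitute precisely the standard argument behind this result. Note that the paper itself does not prove Lemma~\ref{lemma-sta} but imports it from \cite{hardt2016train}; your reconstruction is essentially that cited proof, correctly adapted to the double agent--sample index of the decentralized empirical risk in \eqref{erm}.
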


According to Lemma \ref{lemma-sta}, we can analyze the generalization error utilizing the tool of uniform stability,
and the results are given in the ensuing section.

\section{Generalization Error Analysis}
\label{sec-ge}

Below, we analyze the generalization errors of Byzantine-resilient DSGD when the loss $f(\vx; \xi)$ is strongly convex, con- vex and non-convex, respectively. Therein, $1_{\chi \neq 0}$ outputs 1 if $\chi \neq 0$ while outputs 0 if $\chi=0$. All the proofs are deferred to the appendices.

\subsection{Generalization Error with Strongly Convex Loss}
\label{sec-ge-1}
\begin{theorem}[Generalization error of Byzantine-resilient DSGD with strongly convex loss]
\label{thm-ge}
Suppose that the robust aggregation rules $\{\A_n\}_{n\in \RR}$ in Algorithm \ref{robust-DSGD} satisfy Definition \ref{definition:mixing-matrix},  the associated contraction constant satisfies $\rho < \rho^* := \frac{\beta}{8\sqrt{\vert \RR \vert}}$ and the loss $f(\vx; \xi)$ is $\mu$-strongly convex. Set the step size $\alpha^k= \frac{1}{\mu (k+k_0)}$, where $k_0$ is sufficiently large.
Under Assumptions \ref{assumption:connection}--\ref{assumption:indSampling}, at any given time $k$, the generalization error of Algorithm \ref{robust-DSGD} is bounded by
    \begin{align}
        \label{thm-ge-1}
        & \E_{\cS,\mathcal{L}}[F(\bar \vx^k)-F_{\cS}(\bar \vx^k))] \leq \frac{c_2 1_{\chi \neq 0} M^2Lln(k+k_0)}{\mu^2(k+k_0-1)}  \\
        & +  \frac{2M^2}{\mu Z|\RR| } + \frac{4 c_1 \rho |\RR| M^2}{\mu} + \frac{2c_1\chi\sqrt{|\RR|} M^2}{\mu}. \nonumber
    \end{align}
     Here $c_1,c_2>0$ are
     constants.
\end{theorem}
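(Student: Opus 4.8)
The plan is to bound the uniform-stability parameter $\epsilon$ of the average model $\bar\vx^k$ and then invoke Lemma \ref{lemma-sta}. Since the bounded-gradient Assumption \ref{assumption:gradients} makes each $f(\cdot;\xi)$ an $M$-Lipschitz function, the generalization error is controlled by $M\,\E\|\bar\vx^k-\bar\vx'^k\|$, where $\vx^k$ and $\vx'^k$ are the iterates produced by Algorithm \ref{robust-DSGD} on two datasets $\cS,\cS'$ differing in a single sample, say at non-Byzantine agent $n_0$. The core of the proof is therefore to track the per-agent stability divergences $\delta_n^k:=\E\|\vx_n^k-\vx_n'^k\|$ and, in particular, their average $\bar\delta^k:=\frac{1}{|\RR|}\sum_{n\in\RR}\delta_n^k$, which upper-bounds $\E\|\bar\vx^k-\bar\vx'^k\|$.

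First I would write the one-step recursion in two stages. For the gradient half-step, $\mu$-strong convexity together with the $L$-smoothness of Assumption \ref{assumption:Lip} makes $\vx\mapsto\vx-\alpha^k\nabla f(\vx;\xi)$ a $(1-\alpha^k\mu)$-contraction once $\alpha^k\le 1/L$ (guaranteed by taking $k_0$ large); the only mismatch occurs at agent $n_0$ when its differing index is drawn, which under Assumption \ref{assumption:indSampling} happens with probability $1/Z$ and, by Assumption \ref{assumption:gradients}, injects an additive $2\alpha^k M$. Taking expectation over the step-$k$ sampling gives $\delta_n^{k+1/2}\le(1-\alpha^k\mu)\delta_n^k+\frac{2\alpha^k M}{Z}\,\bm 1_{\{n=n_0\}}$ for the half-step divergence. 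For the aggregation step I would decompose $\vx_n^{k+1}-\vx_n'^{k+1}$ into the virtual-mixing part $\sum_m w_{nm}(\vx_m^{k+1/2}-\vx_m'^{k+1/2})$ and the robust-aggregation residual $(\A_n-\hat\vx_n)-(\A_n'-\hat\vx_n')$. The residual is the crux: because $\A_n$ is nonlinear it cannot be folded into the divergences and must instead be bounded, via Definition \ref{definition:mixing-matrix}, by $\rho$ times the consensus error of the \emph{actual} iterates on each trajectory.

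This forces a companion bound on the actual-iterate consensus error $\sum_{n\in\RR}\|\vx_n^{k}-\bar\vx^{k}\|^2$. Using Assumption \ref{assumption:gradients} to cap the per-step drift by $\alpha^k M$, the mixing contraction $\beta$ of Assumption \ref{assumption:connection} to damp disagreement, and the hypothesis $\rho<\rho^\ast$ to keep the aggregation residual subcritical, I would show this consensus error stays of order $\alpha^k$ (up to a factor growing with $|\RR|$ and $1/\beta$), so that the aggregation bias entering the divergence recursion is $\alpha^k\,O(\rho|\RR|M)$. Skewness appears when the mixing part is averaged over $n$: since $W$ is only row-stochastic, $\frac{1}{|\RR|}\sum_n\sum_m w_{nm}(\cdot)=\frac{1}{|\RR|}\sum_m(W^\top\bm 1)_m(\cdot)$, and after centering with $\bm 1^\top(W^\top\bm 1-\bm 1)=0$ the deviation from $\bar\delta^{k+1/2}$ is governed by $\|W^\top\bm 1-\bm 1\|=\chi\sqrt{|\RR|}$ times the spread of the divergences. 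Carrying this spread alongside $\bar\delta^k$ closes a recursion of the schematic form $\bar\delta^{k+1}\le(1-\alpha^k\mu)\bar\delta^k+\frac{2\alpha^k M}{Z|\RR|}+\alpha^k\,O(\rho|\RR|M)+\alpha^k\,O(\chi\sqrt{|\RR|}M)$. I would then solve it with $\alpha^k=\frac{1}{\mu(k+k_0)}$, for which $\prod_{l=j+1}^{k}(1-\alpha^l\mu)=\frac{j+k_0}{k+k_0}$ telescopes cleanly: every term of the form $\alpha^k B$ accumulates to the steady value $B/\mu$, producing the non-vanishing $\rho$- and $\chi$-terms and the $O(1/(Z|\RR|))$ term, while the initial condition decays like $1/(k+k_0)$. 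A finer treatment of the skewness drift through $L$-smoothness, rather than the crude $\rho,\chi$ floors, contributes only when $\chi\neq0$ the transient term $\frac{c_2 1_{\chi\ne0}M^2 L\ln(k+k_0)}{\mu^2(k+k_0-1)}$, the $\ln$ arising from $\sum_{j\le k}(\alpha^j)^2\prod_{l>j}(1-\alpha^l\mu)\approx\frac{\ln(k+k_0)}{\mu^2(k+k_0)}$. Multiplying the resulting bound on $\bar\delta^k$ by $M$ and applying Lemma \ref{lemma-sta} yields \eqref{thm-ge-1}.

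The main obstacle is the interplay of the nonlinear robust-aggregation residual with the non-doubly-stochastic mixing. Unlike attack-free DSGD, the residual cannot be absorbed into the stability divergences and instead injects a bias proportional to $\rho$ (and, through skewness, to $\chi$) times an actual-iterate consensus error that persists even as $Z\to\infty$. Controlling this bias — showing it scales with $\alpha^k$ so that it yields a bounded rather than growing divergence, which is exactly where the coupling between the consensus-error bound and the stability recursion must be handled carefully, and where the condition $\rho<\rho^\ast$ is indispensable — is precisely what renders the $\rho$- and $\chi$-terms in \eqref{thm-ge-1} irreducible.
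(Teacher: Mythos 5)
Your proposal is sound and shares the paper's core architecture: uniform stability plus Lemma \ref{lemma-sta}; a two-stage recursion (per-agent gradient step contracting by $(1-\alpha^k\mu)$, with a probability-$\frac{1}{Z}$ sample mismatch injecting $\frac{2\alpha^k M}{Z}$); the robust-aggregation residual bounded via Definition \ref{definition:mixing-matrix} by $\rho$ times an actual-iterate consensus error; a companion consensus bound $\E\sqrt{H^k}=O(\alpha^k M)$ under $\rho<\rho^*$ (this is exactly the paper's Lemma \ref{lemma-dm}); and telescoping under $\alpha^k=\frac{1}{\mu(k+k_0)}$ so that each $\alpha^k B$ source accumulates to a $B/\mu$ floor. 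Where you genuinely depart from the paper is the treatment of row stochasticity. The paper splits into two cases: when $W$ is doubly stochastic it tracks $\eta^k=\frac{1}{|\RR|}\sum_{n\in\RR}\|\vx_n^k-\vx_n'^k\|$ (your $\bar\delta^k$), but when $W$ is only row stochastic it declares the $\eta^k$ recursion unusable and instead tracks $\delta^k=\|\bar\vx^k-\bar\vx'^k\|$ directly, decomposing $\delta^{k+1}$ through the half-step averages and applying the skewness bound \eqref{ge-4} to the consensus spread of the actual iterates. You instead keep $\bar\delta^k$ throughout and repair the row-stochastic step by centering the column sums, $\sum_m((W^\top\bm{1})_m-1)d_m=\sum_m((W^\top\bm{1})_m-1)(d_m-\bar d)$, and bounding the spread of the divergences $|d_m-\bar d|$ by the two trajectories' consensus errors; this does close the recursion, since that spread is itself $O(\sqrt{|\RR|H^k})=O(\sqrt{|\RR|}\alpha^k M)$, so your unified route is viable and in fact yields a slightly tighter bound. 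The price of the paper's route is visible in the theorem statement itself: when tracking $\delta^k$, the strong-convexity contraction must be applied to the gradient map at the average iterate, and the discrepancy between $\frac{1}{|\RR|}\sum_{m\in\RR}\nabla f(\vx_m^k;\xi_m^k)$ and $\nabla f(\bar\vx^k)$ costs an extra $2\alpha^k L\E\sqrt{H^k}=O((\alpha^k)^2 ML)$ per step (see \eqref{ge-7} and \eqref{ge-7-1}), which telescopes into the $\frac{c_2 1_{\chi\neq 0}M^2L\ln(k+k_0)}{\mu^2(k+k_0-1)}$ transient. Your attribution of that logarithmic term to ``a finer treatment of the skewness drift'' is therefore not quite faithful to its actual origin, and is somewhat reverse-engineered from the statement: in your own scheme the gradient contraction stays per-agent, no $L$-dependent discrepancy term arises, and the $\ln$ transient would simply be absent --- which is harmless, since \eqref{thm-ge-1} is an upper bound.
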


When the loss is strongly convex, Theorem \ref{thm-ge} provides the generalization errors of a class of Byzantine-resilient DSGD algorithms equipped with robust aggregation rules that satisfy Definition \ref{definition:mixing-matrix}. Given that the Byzantine agents are not present and that the robust aggregation rules ensure proper mixing, Algorithm \ref{robust-DSGD} reduces to attack-free DSGD and the constants $\rho=\chi=0$. In this situation, the first, third and fourth terms at the right-hand side of \eqref{thm-ge-1} all become zero. Consequently, the generalization error is in the order of $O(\frac{M^2}{\mu Z|\RR| })$, which is independent on $k$. As the number of samples $Z$ and the number of agents $|\RR|$ increase to infinity, the generalization error tends to 0. Such a generalization error bound matches the ones obtained for SGD \cite{hardt2016train} and DSGD \cite{bars2023improved}, and is optimal when the loss is strongly convex \cite{zhang2022stability}. It is notably tighter than those established in \cite{sun2021stability} and \cite{deng2023stability}, in which the bounds contain additional terms that do not vanish when $Z$ and $|\RR|$ go to infinity.

When the Byzantine agents are present, generally speaking the contraction constant $\rho \neq 0$. In this situation, the third term at the right-hand side of \eqref{thm-ge-1} is in the order of $O(\frac{ \rho |\RR| M^2}{\mu})$. Furthermore, when the virtual mixing matrix $W$ is non-doubly stochastic such that $\chi \neq 0$, we have two additional terms: the first term vanishes when $k$ goes to infinity, but the fourth term does not vanish and is in the order $O(\frac{ \chi \sqrt{|\RR|} M^2}{\mu})$.

\subsection{Generalization Error with Convex Loss}
\label{sec-ge-2}
\begin{theorem}[Generalization error of Byzantine-resilient DSGD with convex loss]
\label{thm-ge-c}
Suppose that the robust aggregation rules $\{\A_n\}_{n\in \RR}$ in Algorithm \ref{robust-DSGD} satisfy Definition \ref{definition:mixing-matrix}, the associated contraction constant satisfies $\rho < \rho^* := \frac{\beta}{8\sqrt{\vert \RR \vert}}$ and the loss $f(\vx; \xi)$ is convex. Set the step size $\alpha^k= \frac{1}{k+k_0}$, where $k_0$ is sufficiently large.
Under Assumptions \ref{assumption:connection}--\ref{assumption:indSampling}, at any given time $k$, the generalization error of Algorithm \ref{robust-DSGD} is bounded by
    \begin{align}
        \label{thm-ge-c-1}
        & \E_{\cS,\mathcal{L}}[F(\bar \vx^k)-F_{\cS}(\bar \vx^k)]
        \leq    c_2 1_{\chi \neq 0} M^2 L  \\ & + \lp \frac{2M^2}{ Z|\RR| } + 4 c_1 \rho |\RR| M^2 +2 c_1 \chi\sqrt{|\RR|} M^2 \rp ln(k+k_0).  \nonumber
    \end{align}
     Here $c_1, c_2>0$ are constants.
\end{theorem}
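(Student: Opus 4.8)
The plan is to bound the uniform stability parameter $\epsilon$ and then invoke Lemma~\ref{lemma-sta}, which reduces the generalization error to $\epsilon$. Consider two datasets $\cS$ and ${\cS}'$ differing in a single sample, located in the local dataset of one non-Byzantine agent $n^*\in\RR$, and run Algorithm~\ref{robust-DSGD} on both with identical sampling order and aggregation randomness, producing coupled iterates $\vx_n^k$ and ${\vx_n^k}'$. Since Assumption~\ref{assumption:gradients} bounds the stochastic gradients by $M$, the loss is $M$-Lipschitz, so $\sup_\xi\E_\cL[f(\bar\vx^k;\xi)-f({\bar\vx^k}';\xi)]\le M\,\E_\cL\|\bar\vx^k-{\bar\vx^k}'\|$. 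It therefore suffices to control the averaged model divergence, and I would track $V^k:=\frac{1}{|\RR|}\sum_{n\in\RR}\E_\cL\|\vx_n^k-{\vx_n^k}'\|$, noting that $\E_\cL\|\bar\vx^k-{\bar\vx^k}'\|\le V^k$.

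First I would derive a one-step recursion by splitting each iteration into the local SGD step and the robust aggregation step. For the SGD step, convexity and $L$-smoothness make the map $\vx\mapsto\vx-\alpha^k\nabla f(\vx;\xi)$ non-expansive whenever $\alpha^k\le 2/L$, which holds for $\alpha^k=\frac{1}{k+k_0}$ with $k_0$ large; the only growth comes from agent $n^*$ drawing the differing sample, an event of probability $\frac{1}{Z}$ under Assumption~\ref{assumption:indSampling} that inflates the divergence by at most $2\alpha^k M$. This yields $\frac{1}{|\RR|}\sum_n\E_\cL\|\vx_n^{k+\frac{1}{2}}-{\vx_n^{k+\frac{1}{2}}}'\|\le V^k+\frac{2\alpha^k M}{Z|\RR|}$. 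For the aggregation step I would invoke Definition~\ref{definition:mixing-matrix} to write $\vx_n^{k+1}=\hat\vx_n+e_n$ with $\hat\vx_n=\sum_{m}w_{nm}\vx_m^{k+\frac{1}{2}}$ and $\|e_n\|\le\rho\max_m\|\vx_m^{k+\frac{1}{2}}-\hat\vx_n\|$, so that the divergence is controlled by a virtual-mixing term $\|\hat\vx_n-\hat\vx_n'\|$ plus the aggregation residuals $\|e_n\|+\|e_n'\|$.

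The two error sources must then be estimated separately. For the mixing term, averaging over $n$ produces weights $(W^\top\bm{1})_m$; using the row-stochasticity of $W$, hence $\sum_m((W^\top\bm{1})_m-1)=0$, together with Cauchy--Schwarz and $\|W^\top\bm{1}-\bm{1}\|=\sqrt{|\RR|}\,\chi$, the excess of this term over the clean average $\frac{1}{|\RR|}\sum_m\|\vx_m^{k+\frac{1}{2}}-{\vx_m^{k+\frac{1}{2}}}'\|$ is governed by $\chi$ times the dispersion of the per-agent divergences, and vanishes exactly when $W$ is doubly stochastic ($\chi=0$). For the aggregation residual, I would bound the consensus spread $\max_m\|\vx_m^{k+\frac{1}{2}}-\hat\vx_n\|$ by importing the optimization-side consensus estimate available under $\rho<\rho^*=\frac{\beta}{8\sqrt{|\RR|}}$ and bounded gradients \cite{Ye2023}, which contributes terms proportional to $\alpha^k\rho|\RR|M$ and $\alpha^k\chi\sqrt{|\RR|}M$, with the smoothness constant $L$ entering the residual $\chi$-driven piece. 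Collecting everything gives a scalar recursion of the form $V^{k+1}\le V^k+\frac{2\alpha^k M}{Z|\RR|}+c_1\alpha^k(4\rho|\RR|+2\chi\sqrt{|\RR|})M+\mathrm{const}\cdot 1_{\chi\neq0}$; because the convex step size carries no strong-convexity contraction factor, I would telescope from $0$ to $k$ and use $\sum_{j=0}^k\alpha^j=O(\ln(k+k_0))$ to bound $V^k$, then multiply by $M$ to obtain \eqref{thm-ge-c-1}. This mirrors the strongly convex argument of Theorem~\ref{thm-ge}, the sole difference being that non-expansiveness (factor $1$) replaces the contraction $(1-\mu\alpha^k)$, turning the constant bounds there into the $\ln(k+k_0)$-growing ones here.

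The hard part will be closing the coupling between the divergence recursion and the consensus spread. Unlike attack-free DSGD with a doubly-stochastic mixing matrix, where the average is preserved and the recursion closes immediately, here the virtual mixing matrix is only row-stochastic and the robust aggregation injects the $\rho$-residual, so one must simultaneously control the skewness-induced drift of the average (the $\chi$ terms) and an aggregation error that depends on the inter-agent spread rather than on the divergence itself. Establishing a self-contained, non-exploding bound on this spread under the threshold $\rho<\rho^*$ is the technical crux and is precisely what separates this analysis from the standard single-agent and attack-free stability arguments.
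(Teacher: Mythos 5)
Your overall architecture---couple the two runs, reduce the generalization error to $M\,\E\|\bar\vx^k-\bar\vx'^k\|$ via Lemma \ref{lemma-sta} and Assumption \ref{assumption:gradients}, split each iteration into a non-expansive SGD step (with probability $\frac{1}{Z}$ of a $2\alpha^k M$ inflation) and an aggregation step controlled by Definition \ref{definition:mixing-matrix}, then telescope using $\sum_{j}\alpha^j=O(\ln(k+k_0))$---is the same as the paper's. Where you genuinely diverge is that you track the per-agent divergence $V^k=\frac{1}{|\RR|}\sum_{n\in\RR}\E\|\vx_n^k-\vx_n'^k\|$ (the paper's $\eta^k$) for \emph{both} doubly stochastic and row stochastic virtual mixing matrices, absorbing the skewness of $W$ through $\sum_m((W^\top\bm{1})_m-1)=0$ and Cauchy--Schwarz. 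The paper instead splits into two cases: for doubly stochastic $W$ it tracks $\eta^k$ (column stochasticity makes the mixing step exactly average-preserving), while for merely row stochastic $W$ it explicitly declares the $\eta^k$ recursion unusable and switches to tracking $\delta^k=\|\bar\vx^k-\bar\vx'^k\|$; that switch is what produces the $2\alpha^k L \E\sqrt{H^k}$ term and hence the constant $c_2 1_{\chi\neq 0}M^2L$ in \eqref{thm-ge-c-1}. If your unified route is completed correctly, it would in fact give a slightly stronger bound: the skewness enters as $O(\chi M^2\ln(k+k_0))$ rather than $O(\chi\sqrt{|\RR|}M^2\ln(k+k_0))$, and no $L$-dependent constant term is needed (which is harmless, since the theorem is an upper bound).

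However, the crux of your route is asserted but never established. Writing $d_m=\|\vx_m^{k+\frac{1}{2}}-\vx_m'^{k+\frac{1}{2}}\|$, the centering step gives an excess of at most $\frac{\chi}{\sqrt{|\RR|}}\left(\sum_m(d_m-c)^2\right)^{1/2}$ for any constant $c$, and you must then choose $c$ so that this dispersion is summable. The needed observation is that with $c=\|\bar\vx^{k+\frac{1}{2}}-\bar\vx'^{k+\frac{1}{2}}\|$, the triangle inequality yields $|d_m-c|\le\|\vx_m^{k+\frac{1}{2}}-\bar\vx^{k+\frac{1}{2}}\|+\|\vx_m'^{k+\frac{1}{2}}-\bar\vx'^{k+\frac{1}{2}}\|$, so the dispersion is controlled by the consensus spreads of the \emph{two} coupled runs, each $O(\alpha^k M)$ in expectation by Lemma \ref{lemma-dm}. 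Without this step, the only generic Cauchy--Schwarz bound is $\left(\sum_m d_m^2\right)^{1/2}\le\sum_m d_m$, which makes the excess of order $\chi\sqrt{|\RR|}$ times $V$ itself; the recursion then becomes multiplicative, $V^{k+1}\le(1+\chi\sqrt{|\RR|})V^k+\cdots$, and explodes geometrically. Your proof stands or falls on this one inequality, and it is exactly the point the paper sidesteps by changing variables to $\delta^k$; it needs to be stated and proved, not gestured at as ``dispersion.''

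Separately, your displayed recursion contains a per-iteration term $\mathrm{const}\cdot 1_{\chi\neq 0}$ that is not multiplied by $\alpha^k$. As written this is fatal: telescoping a constant per-step term over $k$ iterations produces $O(k)$ growth, not the constant $c_2 1_{\chi\neq 0}M^2L$ of \eqref{thm-ge-c-1} and not anything $O(\ln k)$. In the paper the corresponding contribution is $2\alpha^k L\E\sqrt{H^k}=O\!\left(ML/(k+k_0)^2\right)$ per step, which sums to a constant; in your $V^k$-based analysis no such term arises at all. You should delete it rather than insert it to match the theorem term-by-term.
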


Similar to the discussion in Section \ref{sec-ge-1}, when $\rho=\chi=0$, the derived generalization error bound in Theorem \ref{thm-ge-c} is in the order of $O(\frac{M^2 lnk}{ Z|\RR| })$, recovers the ones obtained for SGD \cite{hardt2016train} and DSGD \cite{bars2023improved}, and is optimal when the loss is convex \cite{zhang2022stability}.


When the Byzantine agents are present, the generally non-zero contraction constant $\rho$ introduces a generalization error term in the order of $O( \rho |\RR| M^2 lnk)$. In addition, a non-doubly stochastic virtual mixing matrix $W$, which implies $\chi \neq 0$, yields a constant term in the order of $O(M^2L)$ and a time-dependent term $O( \chi \sqrt{|\RR|} M^2 lnk)$.

\begin{remark}
    Comparing Theorems \ref{thm-ge} and \ref{thm-ge-c}, we observe that the generalization error with the convex loss sublinearly increases with $k$, while that with the strongly convex loss does not. This comparison highlights that incorporating a strongly convex regularization term into the convex loss leads to a significantly improved generalization error bound. Such an observation aligns with the conclusions drawn in the prior works \cite{hardt2016train,shalev2010learnability,zhang2022stability} and will be validated in the numerical experiments in Section \ref{sec-num}.
\end{remark}
%

\begin{figure*}[h]
\centering
{\includegraphics[width=7in]{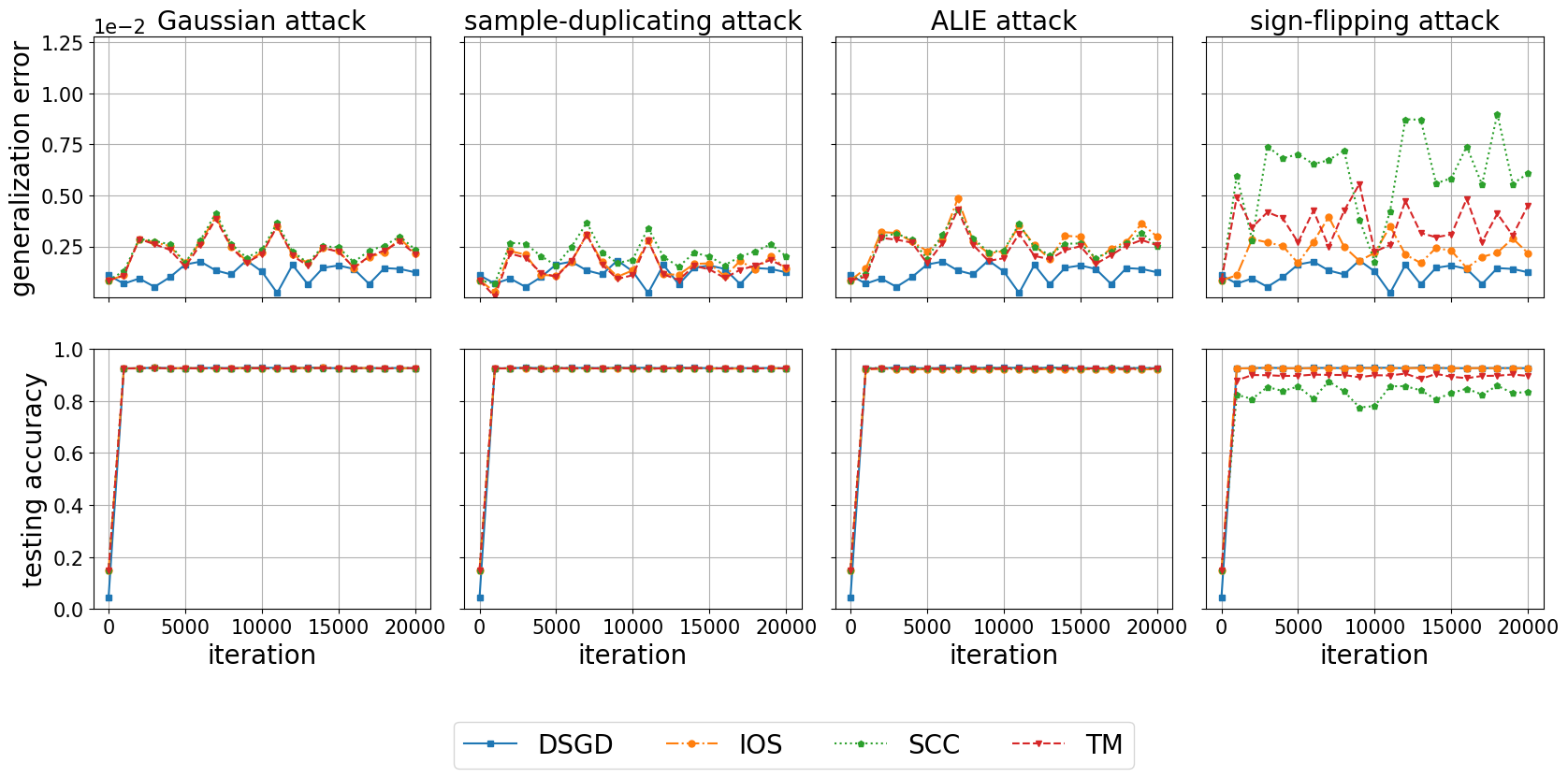}}
\caption{Generalization error and testing accuracy of attack-free and Byzantine-resilient DSGD with strongly convex loss.}\label{sc-iid}
\end{figure*}

\begin{figure*}[h]
\centering
{\includegraphics[width=7in]{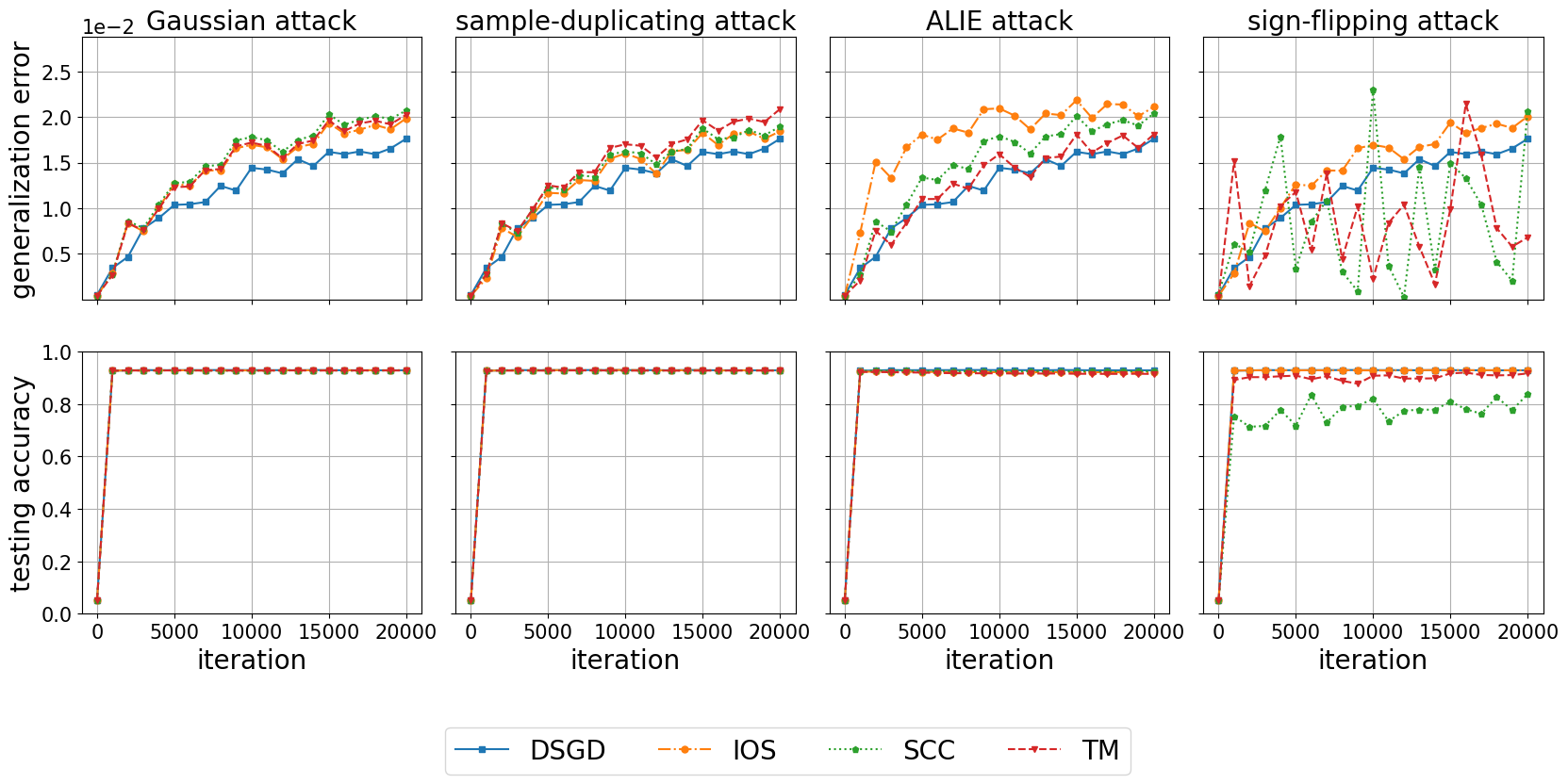}}
\caption{Generalization error and testing accuracy of attack-free and Byzantine-resilient DSGD with convex loss.}\label{c-iid}
\end{figure*}

\begin{figure*}[h]
\centering
{\includegraphics[width=7in]{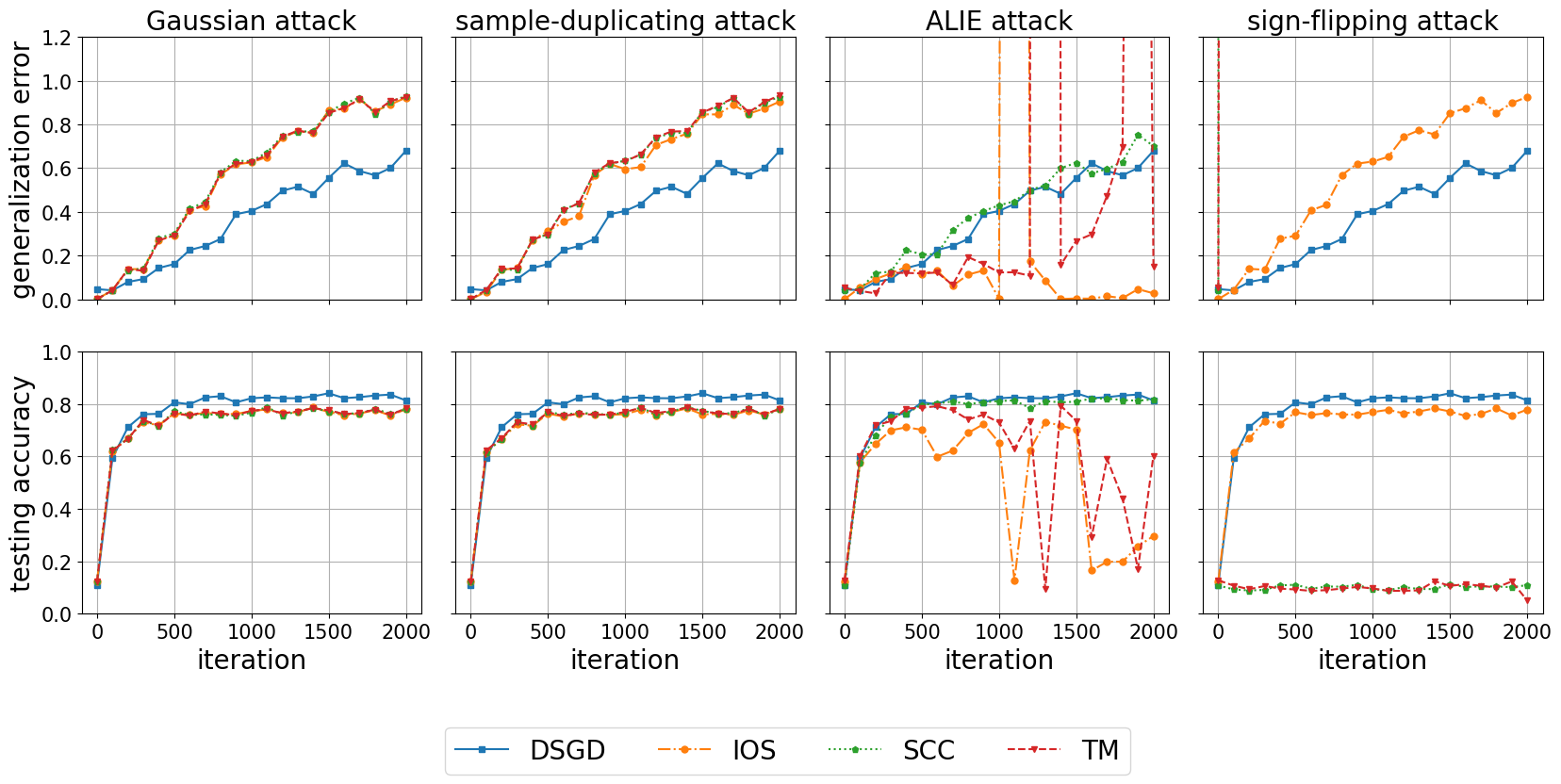}}
\caption{Generalization error and testing accuracy of attack-free and Byzantine-resilient DSGD with non-convex loss.}\label{nc-iid}
\end{figure*}

\begin{figure*}[h]
\centering
{\includegraphics[width=7in]{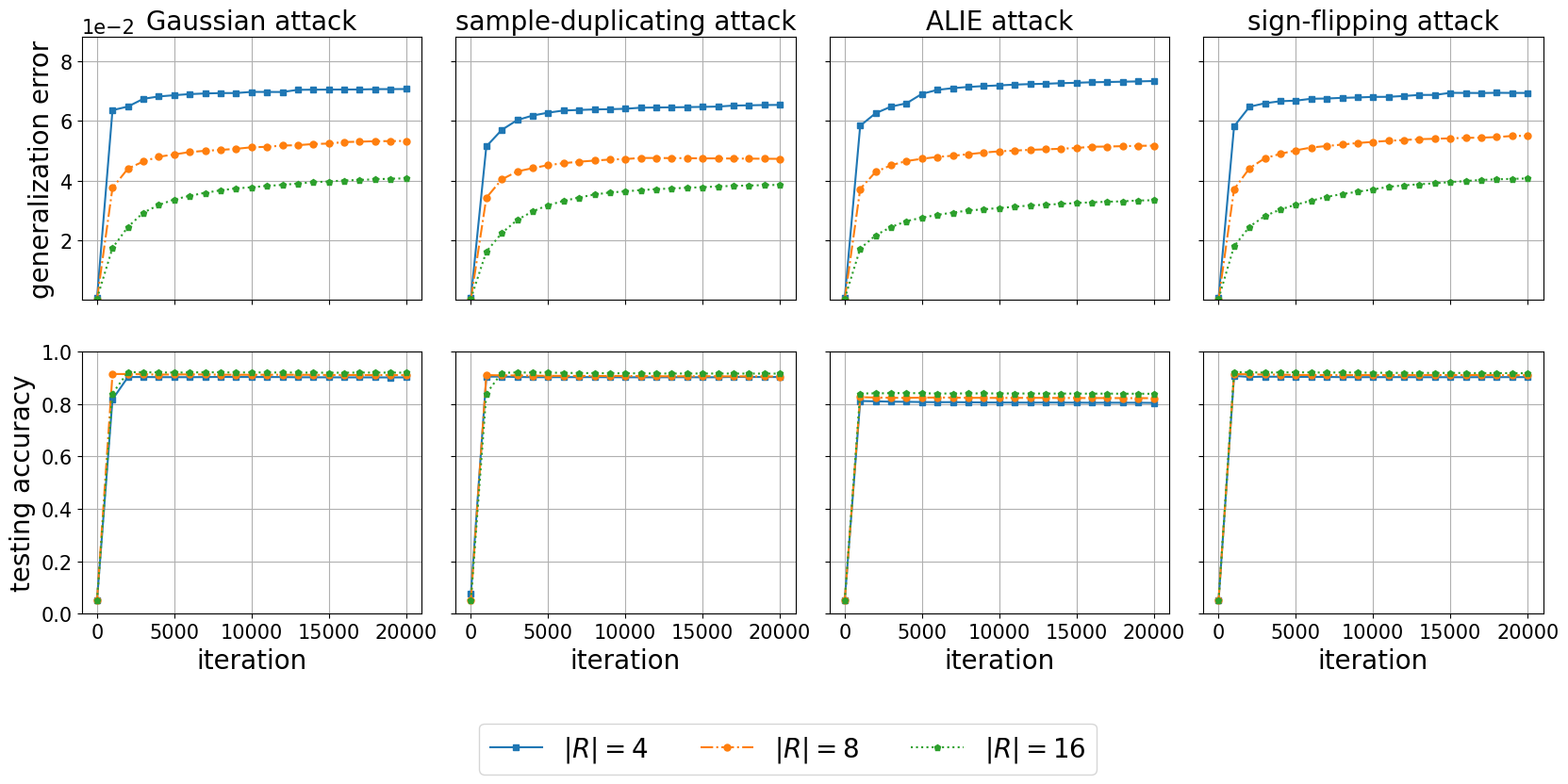}}
\caption{Generalization error and testing accuracy of Byzantine-resilient DSGD using IOS with convex loss and different $|\RR|$.}\label{ios-r}
\end{figure*}

\subsection{Generalization Error with Non-Convex Loss}
\label{sec-ge-3}
\begin{theorem}[Generalization error of Byzantine-resilient DSGD with non-convex loss]
\label{thm-ge-n}
Suppose that the robust aggregation rules $\{\A_n\}_{n\in \RR}$ in Algorithm \ref{robust-DSGD} satisfy Definition \ref{definition:mixing-matrix}, the associated contraction constant satisfies $\rho < \rho^* := \frac{\beta}{8\sqrt{\vert \RR \vert}}$, and the loss $f(\vx; \xi)$ is non-convex. Set the step size $\alpha^k= \frac{1}{L (k+k_0)}$, where $k_0$ is sufficiently large.
Under Assumptions \ref{assumption:connection}--\ref{assumption:indSampling}, at any given time $k$, the generalization error of Algorithm \ref{robust-DSGD} is bound- ed by
    \begin{align}
        \label{thm-ge-n-1}
        & \E_{\cS,\mathcal{L}}[F(\bar \vx^k)-F_{\cS}(\bar \vx^k)] \leq  \frac{c_2 1_{\chi \neq 0} M^2  (k+k_0)}{L} \\
         &  + \lp \frac{2M^2}{L Z|\RR| } +  \frac{ 4c_1\rho |\RR| M^2}{L} +  \frac{ 2c_1\chi\sqrt{|\RR|} M^2}{L} \rp   (k+k_0). \nonumber
    \end{align}
     Here $c_1, c_2>0$ are constants.
\end{theorem}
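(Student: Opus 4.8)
The plan is to bound the uniform stability constant $\epsilon$ and then invoke Lemma \ref{lemma-sta}. Since the bounded-gradient Assumption \ref{assumption:gradients} makes every realized loss $M$-Lipschitz, for the two coupled trajectories generated from neighboring datasets $\cS$ and $\cS'$ (which differ in a single sample, say at agent $n_0$) I would first reduce the generalization error to a coupled-iterate discrepancy:
\begin{equation*}
\bigl| f(\bar\vx^k;\xi) - f(\bar\vx^k_{\cS'};\xi) \bigr| \le M\|\bar\vx^k - \bar\vx^k_{\cS'}\| \le \frac{M}{|\RR|}\sum_{n \in \RR}\|\vx^k_n - \vx^k_{n,\cS'}\|.
\end{equation*}
Taking $\E_{\mathcal{L}}$, the supremum over $\xi$ and applying Lemma \ref{lemma-sta}, it suffices to bound $\Delta^k := \frac{1}{|\RR|}\sum_{n \in \RR}\E_{\mathcal{L}}\|\vx^k_n - \vx^k_{n,\cS'}\|$. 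Writing the non-Byzantine iterates in stacked form and using the virtual mixing matrix $W$ of Definition \ref{definition:mixing-matrix}, each round splits into a local gradient step and an aggregation step $X^{k+1} = W X^{k+\frac{1}{2}} + E^k$, where the row-wise residual obeys $\|E^k_n\| \le \rho \max_{m}\|\vx^{k+\frac{1}{2}}_m - \hat\vx_n\|$.

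Next I would derive a scalar recursion for $\Delta^k$. For the gradient step I use that, in the non-convex regime, the $L$-smooth map $\vx \mapsto \vx - \alpha^k \nabla f(\vx;\xi)$ is $(1+\alpha^k L)$-expansive whenever both trajectories draw the same sample, while at the perturbed agent $n_0$ the differing sample --- selected with probability $1/Z$ under the uniform sampling of Assumption \ref{assumption:indSampling} --- contributes an extra additive $2\alpha^k M$. For the aggregation step, row-stochasticity of $W$ lets the norm pass through the weighted average, but because the column sums of $W$ deviate from one --- exactly the quantity measured by $\chi$, since $\|W^\top\bm{1}-\bm{1}\| = \sqrt{|\RR|}\,\chi$ --- a skewness term of order $\chi\sqrt{|\RR|}M$ appears, and the residual $E^k$ injects a term of order $\rho|\RR|M$. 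Collecting these pieces gives a master inequality of the shape
\begin{equation*}
\Delta^{k+1} \le (1+\alpha^k L)\Delta^k + \alpha^k \Bigl( \tfrac{2M}{Z|\RR|} + c_1 \rho |\RR| M + c_1 \chi\sqrt{|\RR|}M \Bigr) + r^k,
\end{equation*}
where $r^k$ is a residual that is active only when $\chi \neq 0$.

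The delicate step --- and where the hypothesis $\rho < \rho^* = \frac{\beta}{8\sqrt{|\RR|}}$ is indispensable --- is the auxiliary estimate controlling $\max_m\|\vx^{k+\frac{1}{2}}_m - \hat\vx_n\|$, i.e.\ the within-trajectory consensus spread that upper-bounds the size of the aggregation residual $E^k$. This spread satisfies its own recursion in which Assumption \ref{assumption:connection} contracts the consensus component of $W X^{k+\frac{1}{2}}$ by the factor $\|(I-\frac{1}{|\RR|}\bm{1}\bm{1}^\top)W\| = \sqrt{1-\beta}$, while the $\rho$-residual re-injects spread; only for $\rho$ below $\rho^*$ does the composite map remain a strict contraction, keeping the spread controlled so that the recursion for $\Delta^k$ is well-posed. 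I expect this consensus-spread analysis to be the main obstacle, and it is precisely what produces the Byzantine terms in $\rho$ and $\chi$ that persist even as the sample size $Z \to \infty$.

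Finally I would solve the master inequality under the non-convex step size $\alpha^k = \frac{1}{L(k+k_0)}$. In contrast to the (strongly) convex cases, the gradient step here is expansive rather than contractive, so the factors accumulate into the product $\prod_{t}(1+\alpha^t L) = \prod_t(1+\frac{1}{t+k_0})$, which telescopes to a factor of order $(k+k_0)$. Multiplying the per-step error terms by this product, bounding $\Delta^k$, and finally scaling by the Lipschitz constant $M$ yields the stated bound \eqref{thm-ge-n-1}, in which the $1_{\chi\neq0}$ residual together with each of the $\frac{M^2}{Z|\RR|}$, $\rho|\RR|M^2$ and $\chi\sqrt{|\RR|}M^2$ contributions all inherit the linear-in-$(k+k_0)$ growth. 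The gradient-step and Lipschitz reductions should be routine; the crux is the spread/contraction bookkeeping under $\rho<\rho^*$ described above.
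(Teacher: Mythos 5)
Your overall architecture is sound and, for the case of a doubly stochastic virtual mixing matrix, it is essentially the paper's proof: the Lipschitz reduction to the coupled-iterate discrepancy, the $(1+\alpha^k L)$-expansiveness of the gradient step with the probability-$\frac{1}{Z}$ extra $2\alpha^k M$ perturbation, the control of the aggregation residual by the within-trajectory consensus spread under $\rho<\rho^*$ (the paper's Lemma \ref{lemma-dm}), and the telescoping of $\prod_t(1+\frac{1}{t+k_0})=O(k+k_0)$ all appear in the paper's argument in exactly these roles.

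The genuine gap is your treatment of the skewness when $W$ is only row stochastic. You track the per-agent discrepancy $\Delta^k=\frac{1}{|\RR|}\sum_{n}\E\|\vx^k_n-\vx^k_{n,\cS'}\|$ throughout and claim the column-sum deviation contributes an \emph{additive} term of order $\chi\sqrt{|\RR|}M$. But after averaging the per-agent bound $\|\hat\vx_n^{k+\frac{1}{2}}-\hat\vx_n'^{k+\frac{1}{2}}\|\le\sum_m w_{nm}\|\vx_m^{k+\frac{1}{2}}-\vx_m'^{k+\frac{1}{2}}\|$ over $n$, the skewness appears as $\frac{1}{|\RR|}\sum_m(c_m-1)\|\vx_m^{k+\frac{1}{2}}-\vx_m'^{k+\frac{1}{2}}\|$, where $c_m$ are the column sums of $W$. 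Because norms are nonnegative, the cancellation $\sum_m(c_m-1)=0$ cannot be exploited here, and Cauchy--Schwarz bounds this term only by $\chi$ times a norm of the full \emph{between-trajectory} discrepancy --- the very quantity you are tracking, which itself grows like $k$ --- not by anything of size $M$. Your master inequality therefore acquires a multiplicative factor $(1+O(\chi\sqrt{|\RR|}))$ per step, and telescoping yields exponential growth $(1+O(\chi\sqrt{|\RR|}))^k$ rather than the theorem's linear-in-$(k+k_0)$ bound. This is precisely why the paper abandons the per-agent discrepancy (its $\eta^k$) when $\chi\neq 0$ and instead tracks $\delta^k=\|\bar\vx^k-\bar\vx'^k\|$ directly: for the average iterate, the skewness enters as $\frac{1}{|\RR|}(\bm{1}^\top W-\bm{1}^\top)X^{k+\frac{1}{2}}$, and row stochasticity gives $\bm{1}^\top W\bm{1}=\bm{1}^\top\bm{1}$, so this equals $\frac{1}{|\RR|}(\bm{1}^\top W-\bm{1}^\top)(I-\frac{1}{|\RR|}\bm{1}\bm{1}^\top)X^{k+\frac{1}{2}}$; the skewness thus acts only on the within-trajectory consensus error, which Lemma \ref{lemma-dm} bounds by $O(\alpha^k M)$, producing the additive $\chi\sqrt{|\RR|}M$ per-step term (see (\ref{ge-4})). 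The $\delta^k$-based analysis is also what generates, through the gradient-at-average versus average-of-gradients comparison term $2\alpha^k L\E\sqrt{H^k}$, the $1_{\chi\neq 0}\,c_2M^2(k+k_0)/L$ contribution that you left unexplained as the residual $r^k$. Without this switch to the average discrepancy (or an equivalent device), your recursion cannot be closed in the $\chi\neq 0$ case.
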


Again, in the case that $\rho=\chi=0$, the derived generalization error bound in Theorem \ref{thm-ge-n} is in the order of $O(\frac{M^2 k}{L Z|\RR| })$.

When the Byzantine agents are present such that the contraction constant $\rho \neq 0$ in general, the induced generalization error term is in the order of $O(\frac{ \rho |\RR| M^2 k}{L})$. Furthermore, a non-doubly stochastic virtual mixing matrix $W$, which implies $\chi$ $\neq 0$, yields two time-dependent terms in the orders of $O(\frac{M^2k}{L})$ and $O(\frac{ \chi \sqrt{|\RR|} M^2 k}{L})$, respectively. 

Note that the generalization error bound  in Theorem \ref{thm-ge-n} can be improved by considering the first time when the different training samples from $\cS$ and $\cS'$ are selected. We leave it to Theorem \ref{thm-ge-in} in the Appendix \ref{app-im}.


\subsection{New Features}

\begin{table*}[!htbp]
       \normalsize
    \caption{Generalization errors of attack-free DSGD and Byzantine-resilient DSGD with different losses.}
    \label{table-result}
    \centering
    \begin{tabular}{ccc}
    \hline
    loss     &attack-free DSGD  & Byzantine-resilient DSGD  \\
    \hline
    strongly convex & $O\lp \frac{M^2}{ \mu Z|\RR| } \rp$ & $O\lp \frac{M^2}{\mu Z|\RR| } \rp  + O\lp \frac{ \rho |\RR| M^2}{\mu} \rp + O \lp \frac{ \chi \sqrt{|\RR|} M^2}{\mu} \rp + O\lp \frac{ 1_{\chi \neq 0} M^2 L ln k}{\mu^2k} \rp$  \\
    convex  & $O\lp \frac{M^2 ln k}{Z|\RR| } \rp$ &   $O \lp \frac{ M^2 ln k}{Z|\RR| } \rp + O\lp \rho |\RR| M^2 lnk\rp + O\lp \chi \sqrt{|\RR|} M^2 lnk\rp + O \lp 1_{\chi \neq 0} M^2 L \rp $ \\
    non-convex   & $O\lp \frac{M^2 k}{LZ|\RR| } \rp$ & $O \lp \frac{ M^2 k}{LZ|\RR| } \rp + O\lp \frac{\rho |\RR| M^2 k}{L} \rp + O\lp \frac{ \chi\sqrt{|\RR|} M^2 k}{L} \rp + O\lp \frac{1_{\chi \neq 0} M^2 k}{L} \rp$ \\
    \hline
    \end{tabular}
\end{table*}

We summarize the established generalization error bounds of Byzantine-resilient DSGD in Table \ref{table-result}. Compared with those of attack-free DSGD, there are some new features.
\begin{itemize}
     \item \textbf{Additional Error Terms:} The presence of the Byzantine agents introduces additional error terms.
     They appear to be independent on $Z$, the number of training samples per agent. Instead, they are related to the communication topology and the robust aggregation rules (characterized by $\rho$ and $\chi$), the number of non-Byzantine agents $|\mathcal{R}|$, as well as the loss (characterized by $L$ and $M$).
    \item \textbf{Impossibility of Converging to Zero:} The generalization errors are unable to converge to zero when the number of training samples per agent $Z$ and/or the number of non-Byzantine agents $|\mathcal{R}|$ increase to infinity, because of the additional error terms.
    \item \textbf{Communication Topology Dependence:} For attack-free DSGD, the generalization errors are independent on the communication topology.
    However, in the presence of the Byzantine agents, the communication topology plays an important role.
The underlying reason is that the negative impact of the Byzantine agents is closely linked to the communication topology.
For a fixed robust aggregation rule, different communication topologies lead to different $\rho$ and $\chi$, thus leading to different additional error terms.
\end{itemize}

These features highlight the negative impact of the Byzantine attacks on the generalization error of Byzantine-resilient decentralized learning. To understand the insight, imagine that the arbitrarily malicious messages sent by the Byzantine agents can be viewed as from arbitrarily corrupted training samples within the dataset $\cS$. Because we are unable to completely remove these arbitrarily corrupted training samples in practice, the contaminated dataset $\cS$ is unable to represent the true underlying data distribution. In consequence, the generalization error can never be zero.

\begin{remark}
The presence of the Byzantine agents introduces new challenges to establishing the generalization error, as the arbitrarily malicious messages, the robust aggregation rules,
and the possibly row stochastic virtual mixing matrices comp- licate the stability analysis. Traditional technical tools that are effective in analyzing attack-free DSGD with doubly stochastic mixing matrices, such as those in Lemma 8 of \cite{sun2021stability} and Lemma 5 of \cite{deng2023stability}, are no longer applicable.

To address these challenges, we decompose the difference of the algorithm outputs on datasets $\cS$ and $\cS'$, which is denoted by $\|\bar\vx^{k+1} - \bar\vx'^{k+1} \|$, into $\|\bar\vx^{k+1} - \bar\vx^{k+\frac{1}{2}} \| +  \|\bar\vx^{k+\frac{1}{2}} - \bar\vx'^{k+\frac{1}{2}} \| + \|\bar\vx'^{k+1} - \bar\vx'^{k+\frac{1}{2}} \|$. To handle the first and the third terms, we leverage Definition \ref{definition:mixing-matrix} and Assumption \ref{assumption:connection} to bound the incre- ments. To handle the second term, we consider two scenarios, depending on whether different training samples are selected at time $k$. We then bound the impact of the possibly different training samples using Assumptions \ref{assumption:gradients}--\ref{assumption:indSampling}. Finally, leveraging Assumption \ref{assumption:Lip} and setting appropriate step sizes, we establish the recursion between $\|\bar\vx^{k+1} - \bar\vx'^{k+1} \|$ and $\|\bar\vx^{k} - \bar\vx'^{k} \|$, which enables us to prove the theorems by induction.
\end{remark}

\begin{remark}
In this paper, we only focus on the generalization error of Byzantine-resilient decentralized learning. Our prior work \cite{Ye2023} has investigated the optimization error, denoted as $\E_{\cS,\mathcal{L}}[F_{\cS}(\bar\vx^k)-F_{\cS}(\vx^*_{\cS})]$. When the loss is $\mu$-strongly convex but not $L$-smooth, while the stochastic gradients are bounded by $M$, the optimization error of Algorithm \ref{robust-DSGD} is in the order of $O ( \frac{\rho^2 |\RR| M^2}{\mu} )  + O ( \frac{\chi^2 M^2}{\mu} )$ given that the training time $k$ goes to infinity. We can conclude that the presence of the Byzantine agents introduces non-vanishing terms for both optimization and generalization errors, even when the training time $k$ and the number of training samples per agent $Z$ are infinite.

Solely minimizing either the optimization error or the generalization error is insufficient to guarantee a favorable testing accuracy. For example, a random guess exhibits the best gene- ralization ability since it behaves consistently and randomly on both the training and testing samples. Nevertheless, its optimization error is very large, which eventually leads to an unsatisfactory testing accuracy. We will investigate the inter- play between the generalization and optimization errors, as well as their joint minimization in our future work.
\end{remark}

\begin{remark}

According to Theorems \ref{thm-ge}--\ref{thm-ge-n}, when the loss function is not strongly convex, the established generalization error bounds increase with time, eventually tending towards infinity as $k \rightarrow \infty$. Although such time-increasing generalization error bounds are common in the analyses of SGD and DSGD \cite{sun2021stability,deng2023stability,zhu2022topology,bars2023improved,richards2020graph,taheri2023generalization,hardt2016train,zhang2022stability},
    they become unsatisfactory for a large $k$, as experimental evidences typically show that the generalization abilities stabilize rather than deteriorate indefinitely.

We assert that such generalization error bounds are meaningful when $k$ is small.
    Actually, according to \cite{chen2018stability}, for any iterative algorithm, the expected excess risk is lower-bounded by a constant, e.g., $O(\frac{1}{\sqrt{Z|\RR|}})$ when the loss is convex.
     This implies that the generalization error should increase with the time in the beginning, due to the decreasing optimization error. We also observe the time-increasing generalization errors in our numerical experiments, when the losses are convex and non-convex (see Section \ref{sec-num}).

    In addition, the time-increasing generalization error bounds are potentially able to be improved to time-uniform ones with further assumptions and advanced theoretical tools. A number of recent works have established time-uniform generalization error bounds \cite{farghly2021time,zhu2024uniform,hendrickx2024convex}.
    Among them, \cite{farghly2021time} and \cite{zhu2024uniform} respectively obtain time-uniform bounds for
    stochastic gradient Lan- gevin dynamics (SGLD) and SGD when the loss is non-convex, under an additional dissipativity assumption on the loss and using the technical tool of Wasserstein distance. The work of \cite{hendrickx2024convex} establishes a time-uniform generalization error bound for projected SGD when the loss is convex, assuming that the po- pulation risk is defined on a compact convex set. Unlike the technical tool of uniform stability that we use, \cite{hendrickx2024convex} investigates the algorithm outputs of two datasets differing by all training

    \noindent samples other than only one training sample.
    Although these bounds have their disadvantages, such as poor dependence on the model dimension or the number of training samples,
    they provide new insights into the evolution of the generalization ability.
    In this paper, we make the first step towards understanding the generalization ability in
    decentralized learning under Byzantine attacks. Deriving time-uniform generalization error bounds with the help of novel technical tools will be one of our future research directions.
\end{remark}

\section{Numerical Experiments}
\label{sec-num}

In the numerical experiments, we construct an Erdos-R{\'e}nyi graph of $10$ agents, and let each pair of agents be connected with probability 0.7. By default, we randomly select $2$ agents to be Byzantine. We consider three tasks: squared $\ell_2$-norm regularized softmax regression on the MNIST dataset, softmax regression on the MNIST dataset and ResNet-18 training on the CIFAR-10 dataset. They correspond to the strongly convex, convex and non-convex losses, respectively. In each task, the training samples are evenly and uniform randomly allocated to all the non-Byzantine agents. During training, the batch size is set to 256. The step size is set to $\alpha^k = \frac{1}{0.01k+1}$ for the strongly convex and convex losses, and $\alpha^k = \frac{0.015}{0.01k+1}$ for the non-convex loss.

We implement Byzantine-resilient DSGD outlined in Algorithm \ref{robust-DSGD} equipped with different robust aggregation rules: IOS \cite{wu2022byzantine}, SCC \cite{he2022byzantine} and TM \cite{fang2022bridge}. The attack-free DGSD without Byzantine agents is used as the baseline. We evaluate the per- formance with two metrics: generalization error and testing accuracy of the non-Byzantine agents' average model $\bar{\vx}^k$. The generalization error is approximated by the difference between the losses on randomly selected testing and training samples, following \cite{deng2023stability,zhu2022topology}. The code is available online\footnote{\url{https://github.com/haoxiangye/BRDSGD-GE}}.

We consider the following Byzantine attacks.

\noindent\textbf{Gaussian Attack\cite{dong2024}.} The Byzantine agents send messages whose elements follow a Gaussian distribution with mean $0$ and variance $900$. \\
\noindent\textbf{Sample-Duplicating Attack\cite{rsa}.} The Byzantine agents col- laboratively select one of the non-Byzantine agents and always duplicate its messages to send. This is equivalent to that the Byzantine agents duplicate the training samples of the selected non-Byzantine agent. \\
\noindent\textbf{A-Little-Is-Enough (ALIE) Attack\cite{baruch2019little}.} To non-Byzantine agent $n$, its Byzantine neighbors send $\frac{1}{|\RR_n|}\sum_{m\in \RR_n }\vx_{m,m}^{k+\frac{1}{2}} + r_n^k \Delta_n^k$ at the $k$-th time, in which $\Delta_n^k$ is the coordinate-wise standard deviation of $\{\vx_{m,m}^{k+\frac{1}{2}}\}_{m\in \RR_n}$ and $r_n^k$ is the scale factor. \\
\noindent\textbf{Sign-Flipping Attack\cite{wu2022byzantine}.} To non-Byzantine agent $n$, its Byzantine neighbors multiply $\frac{1}{|\RR_n|}\sum_{m\in \RR_n }\vx_{m,m}^{k+\frac{1}{2}}$ with a negative constant $-1$ and send the result, at the $k$-th time.

\subsection{Strongly Convex Loss}
\label{sec-num-sc}
The generalization errors and testing accuracies of attack-free and Byzantine-resilient DSGD algorithms with the strongly convex loss are illustrated in Fig. \ref{sc-iid}.
We can observe that the generalization errors fluctuate but do not increase when $k$ increases. We can also observe that the generalization errors of Byzantine-resilient DSGD are larger than that of attack-free DGSD, which validates the additional error terms revealed by Theorem \ref{thm-ge}. Among the robust aggregation rules, IOS turns to be better than TM and SCC. Fig. \ref{sc-iid} also demonstrates that a smaller generalization error is helpful to reach a higher testing accuracy, though the latter involves the optimization error that is not the focus of this paper.

\subsection{Convex Loss}
\label{ex-c}
The generalization errors and testing accuracies of attack-free and Byzantine-resilient DSGD algorithms with the convex loss are illustrated in Fig. \ref{c-iid}. The generalization errors increase as the time $k$ evolves, matching the findings of Theorem \ref{thm-ge-c}. This phenomenon demonstrates that a proper strongly convex regularization term can aid generalization, which is consistent with the conclusions made in \cite{hardt2016train,shalev2010learnability,zhang2022stability}. Observe that the generalization errors of Byzantine-resilient DSGD increase faster than that of attack-free DGSD due to the additional error terms; see Theorem \ref{thm-ge-c}. Under the sign-flipping attack, the ge-
neralization errors of SCC and TM are unstable, and the testing accuracies are also degraded.

\subsection{Non-Convex Loss}
The generalization errors and testing accuracies of attack-free and Byzantine-resilient DSGD algorithms with the non-convex loss are illustrated in Fig. \ref{nc-iid}. Similar to the observations for the convex loss, the generalization errors increase when $k$ increases and Byzantine-resilient DSGD has larger generalization errors than attack-free DSGD as predicted by Theorem \ref{thm-ge-n}. Under the ALIE attack, IOS and TM exhibit unstable generalization errors, accompanied by unstable testing accuracies.
The sign-flipping attack turns out to be the strongest, causing TM and SCC to have very large generalization errors and low testing accuracies from the beginning.

\subsection{Impact of $|\RR|$}
\label{ex-co}
In Theorems \ref{thm-ge}--\ref{thm-ge-n}, the number of the non-Byzantine agents $|\RR|$ appears in the denominators of some terms of the generalization errors, and in the numerators of some other terms. Therefore, a natural question arises: what is the impact of the number of the non-Byzantine agents?
To answer this question, we consider a special case where communication topology is fully connected, the aggregation rule is IOS and the loss is convex.
As shown in \cite{Ye2023}, we have $\rho = \frac{|\B|}{|\RR|}$ and $\chi=0$ in this case. Consequently, the terms of $O (\chi \sqrt{|\RR|} M^2 lnk )$ and $O ( 1_{\chi \neq 0} M^2 L )$  in \eqref{thm-ge-c-1} both vanish. Because $\rho = \frac{|\B|}{|\RR|}$, the term of $O\lp \rho |\RR| M^2 lnk\rp$ becomes $O( |\B| M^2 lnk )$. When $|\B|$ is fixed, the term of $O ( \frac{M^2 ln k}{Z|\RR| } )$ reveals that a large number of the non-Byzantine agents is beneficial to the generalization ability.

We conduct numerical experiments to validate this finding.
We fix $|\B| = 2$ and $Z = 1000$, and test Algorithm \ref{robust-DSGD} equipped with IOS in a fully connected communication topology with varying $|\RR|$. Other parameter settings are consistent with those in Section \ref{ex-c}.
 The generalization errors and testing accuracies are in Fig. \ref{ios-r}. Observe that the generalization error decreases as $|\RR|$ increases, confirming  our theoretical analysis.

\section{Conclusions}
\label{sec-con}
In this paper, we present the first generalization error analysis for a class of Byzantine-resilient decentralized learning algorithms. Our theoretical results reveal that the generalization error cannot vanish even when the number of training samples and the number of non-Byzantine agents go to infinity. These theoretical results are validated by the numerical experiments. We hope that our preliminary work is able to attract more researches in investigating the generalization error for Byzantine-resilient decentralized learning.


\bibliographystyle{IEEEtran}
\bibliography{TSP}

\begin{appendices}

\section{Proof of Theorem \ref{thm-ge}}
\begin{proof} Let $\cS$ and $\cS'$ be two datasets of size $|\RR|Z$, differing in only a single training sample at a certain non-Byzantine agent $i$. In this circumstance, we denote $\cS_i$ and $\cS'_i$ as $\cS_i=\{\xi_{i,1},\cdots,\xi_{i,j},\cdots,\xi_{i,Z} \}$ and $\cS'_i=\{\xi_{i,1},\cdots,\xi'_{i,j},\cdots,\xi_{i,Z} \}$, respectively.
Following the prior works of \cite{sun2021stability,deng2023stability,bars2023improved}, we consider the generalization error of the average model at a given time $k$.
Denote the output of Algorithm \ref{robust-DSGD} as $\mathcal{L}(\cS)= $ $\bar\vx^k =\frac{1}{|\RR|}\sum_{n \in \RR}\vx_n^k$, $\mathcal{L}(\cS')= \bar\vx'^k =\frac{1}{|\RR|}\sum_{n \in \RR}\vx'^k_n$ and also define $\delta^k = \| \bar\vx^k-\bar\vx'^k \|$.
We discuss the stability of Algorithm \ref{robust-DSGD} in two situations based on whether the virtual mixing matrix is doubly stochastic or row stochastic.

\subsection{Doubly Stochastic Virtual Mixing Matrix}
When the virtual mixing matrix is doubly stochastic, it is convenient to analyze $\eta^{k} = \frac{1}{|\RR|} \sum_{n \in \RR} \|\vx_n^{k} - \vx'^{k}_n \|$ instead of $\delta^{k}$, because $\delta^{k} \leq \eta^{k}$. We decompose $\eta^{k+1}$ into three parts, as
\begin{align}
\label{ge-d-1}
     & \eta^{k+1} = \frac{1}{|\RR|} \sum_{n \in \RR} \|\vx_n^{k+1} - \vx'^{k+1}_n \| \\
\leq & \frac{1}{|\RR|} \sum_{n \in \RR}    \|\vx_n^{k+1} - \hat\vx_n^{k+\frac{1}{2}} \| + \frac{1}{|\RR|} \sum_{n \in \RR}  \|\hat\vx_n^{k+\frac{1}{2}} - \hat\vx_n'^{k+\frac{1}{2}} \| \nonumber\\
&+ \frac{1}{|\RR|} \sum_{n \in \RR}  \|\vx_n'^{k+1} - \hat\vx_n'^{k+\frac{1}{2}} \|.  \nonumber
\end{align}
Therein, $\hat\vx_n^{k+\frac{1}{2}} = \sum_{m \in \RR} w_{nm} \vx_m^{k+\frac{1}{2}}$ and $\hat\vx'^{k+\frac{1}{2}}_n = \sum_{m \in \RR}$ $w_{nm} \vx'^{k+\frac{1}{2}}_m$.
For the first term at the right-hand side (RHS) of (\ref{ge-d-1}), we use the contraction property of the robust aggregation rules $ \{ \A_n \}_{n \in  \RR}$ in \eqref{inequality:robustness-of-aggregation-local} to derive
\begin{align}
\label{ge-d-3}
    &\frac{1}{|\RR|} \sum_{n \in \RR} \| \vx_n^{k+1} - \hat\vx_n^{k+\frac{1}{2}} \| \\
    \leq &  \frac{1}{|\RR|} \sum_{n \in \RR} \rho \max_{m \in \RR_n \cup \{n\}} \| \vx_{m}^{k+\frac{1}{2}} - \hat\vx_n^{k+\frac{1}{2}} \| \nonumber \\
    \leq &  \frac{1}{|\RR|} \sum_{n \in \RR} \rho \max_{m \in \RR} \| \vx_{m}^{k+\frac{1}{2}} - \hat\vx_n^{k+\frac{1}{2}} \| \nonumber \\
    \leq &  \frac{1}{|\RR|} \sum_{n \in \RR} \rho \lp \max_{m \in \RR} \| \vx_{m}^{k+\frac{1}{2}} - \bar\vx^{k+\frac{1}{2}} \| + \|  \bar\vx^{k+\frac{1}{2}} - \hat\vx_n^{k+\frac{1}{2}}\|\rp \nonumber \\
    \leq & 2\rho \frac{1}{|\RR|} \sum_{n \in \RR}  \max_{m \in \RR} \| \vx_{m}^{k+\frac{1}{2}} - \bar\vx^{k+\frac{1}{2}} \| \nonumber \\
     \leq & 2\rho \sum_{n \in \RR}   \| \vx_{n}^{k+\frac{1}{2}} - \bar\vx^{k+\frac{1}{2}} \|. \nonumber
\end{align}

According to $\vx_{n}^{k+\frac{1}{2}} = \vx_n^k -\alpha^k \nabla f(\vx^{k}_n; \xi_n^{k}) $ and  $\bar\vx^{k+\frac{1}{2}} = \bar\vx^k -\frac{\alpha^k}{|\RR|} \sum_{m \in \RR} \nabla f(\vx^{k}_m; \xi_m^{k})$, using Assumption \ref{assumption:gradients} we have
\begin{align}
    \label{ge-d-5}
    &  \sum_{n \in \RR}   \| \vx_{n}^{k+\frac{1}{2}} - \bar\vx^{k+\frac{1}{2}} \|  \leq
      \sum_{n \in \RR}  \| \vx_n^k -  \bar\vx^k \|  \\
    & + \alpha^k \sum_{n \in \RR}  \|  \nabla f(\vx^{k}_n; \xi_n^{k}) - \frac{1}{|\RR|} \sum_{m \in \RR} \nabla f(\vx^{k}_m; \xi_m^{k})  \|  \nonumber\\
    \leq & \sum_{n \in \RR}  \| \vx_n^k -  \bar\vx^k \|  + 2 \alpha^k |\RR| M.  \nonumber
\end{align}
The first term at the RHS of (\ref{ge-d-5}) can be bounded by the dis- agreement measure $H^k$ defined in (\ref{hk}), as
\begin{align}
    \label{ge-d-5-1}
    & \sum_{n \in \RR}  \| \vx_n^k -  \bar\vx^k \|   \leq |\RR| \sqrt{H^k}.
\end{align}
Substituting (\ref{ge-d-5}) and (\ref{ge-d-5-1}) into (\ref{ge-d-3}), we obtain
\begin{align}
    \label{ge-d-6}
     \frac{1}{|\RR|} \sum_{n \in \RR} \| \vx_n^{k+1} - \hat\vx_n^{k+\frac{1}{2}} \|
    \leq  2\rho |\RR| ( \sqrt{H^k}+ 2\alpha^k M).
\end{align}
Note that for the third term at the RHS of (\ref{ge-d-1}), an inequality similar to (\ref{ge-d-6}) also holds true.

Now, we analyze the second term at the RHS of (\ref{ge-d-1}). For the non-Byzantine agent $i$, the probability of Algorithm \ref{robust-DSGD} selecting the same training sample from  $\cS_i$ and $\cS'_i$ at time $k$ is $1-\frac{1}{Z}$. According to Lemma 6 in \cite{sun2021stability}, if the loss is strongly convex and Assumption \ref{assumption:Lip} holds, choosing $\alpha^k \leq \frac{1}{L}$ yields
\begin{align}
    \label{ge-d-7}
    & \frac{1}{|\RR|} \sum_{n \in \RR} \|\hat\vx_n^{k+\frac{1}{2}} - \hat\vx_n'^{k+\frac{1}{2}} \|  \\
    = & \frac{1}{|\RR|} \sum_{n \in \RR} \|\sum_{m \in \RR} w_{nm} \lp \vx_m^{k+\frac{1}{2}} - \vx_m'^{k+\frac{1}{2}} \rp\| \nonumber\\
    \leq & \frac{1}{|\RR|} \sum_{n \in \RR} \sum_{m \in \RR} w_{nm}  \| \vx_m^{k+\frac{1}{2}} - \vx_m'^{k+\frac{1}{2}} \| \nonumber\\
    = & \frac{1}{|\RR|} \sum_{n \in \RR} \| \vx_n^{k+\frac{1}{2}} - \vx_n'^{k+\frac{1}{2}} \|  \nonumber\\
    \leq &   (1-\alpha^k\mu) \frac{1}{|\RR|} \sum_{n \in \RR} \| \vx_n^{k} - \vx_n'^{k} \|. \nonumber
\end{align}
Note that the second equality holds true only when the virtual mixing matrix is doubly stochastic. When the virtual mixing matrix is row stochastic but not column stochastic, we are no longer able to analyze the recursion of $\eta^k$ and a different analytical approach is required.
Substituting (\ref{ge-d-6}) and (\ref{ge-d-7}) into (\ref{ge-d-1}), in this case we have
\begin{align}
\label{ge-d-8}
    \E \eta^{k+1} \leq &  (1-\alpha^k\mu)  \E \eta^{k}
    +  4\rho |\RR| (\E \sqrt{H^k}+ 2\alpha^k M).
\end{align}

For the non-Byzantine agent $i$, the probability of Algorithm \ref{robust-DSGD} selecting the different training samples from  $\cS_i$ and $\cS'_i$ at time $k$ is $\frac{1}{Z}$. By Lemma 6 in \cite{sun2021stability}, if the loss is strongly convex and Assumption \ref{assumption:Lip} holds, choosing $\alpha^k \leq \frac{1}{L}$ yields
\begin{align}
    \label{ge-d-9}
  & \frac{1}{|\RR|} \sum_{n \in \RR} \|\hat\vx_n^{k+\frac{1}{2}} - \hat\vx_n'^{k+\frac{1}{2}} \| \\
  \leq  & \frac{1}{|\RR|} \sum_{n \in \RR} \| \vx_n^{k+\frac{1}{2}} - \vx_n'^{k+\frac{1}{2}} \|  \nonumber\\
  = & \frac{1}{|\RR|} ( \|\vx_i^{k+\frac{1}{2}}-\vx'^{k+\frac{1}{2}}_i\| + \sum_{n \in \RR/\{i\}} \| \vx_n^{k+\frac{1}{2}} - \vx_n'^{k+\frac{1}{2}} \| ) \nonumber \\
  \leq & \frac{1}{|\RR|} \| \vx_i^k -\alpha^k \nabla f(\vx^{k}_i; \xi_{i,j})-\vx'^k_i + \alpha^k \nabla f(\vx'^{k}_i; \xi'_{i,j}) \|  \nonumber \\
  & +\frac{1}{|\RR|} \sum_{n \in \RR/\{i\}} \| \vx_n^{k+\frac{1}{2}} - \vx_n'^{k+\frac{1}{2}} \| \nonumber \\
  \leq & \frac{1}{|\RR|} \| \vx_i^k -\alpha^k \nabla f(\vx^{k}_i; \xi_{i,j})-\vx'^k_i + \alpha^k \nabla f(\vx'^{k}_i; \xi_{i,j}) \|  \nonumber \\
  & +\frac{1}{|\RR|} \sum_{n \in \RR/\{i\}} \| \vx_n^{k+\frac{1}{2}} - \vx_n'^{k+\frac{1}{2}} \| \nonumber \\
  & + \frac{\alpha^k}{|\RR|}  \| \nabla f(\vx'^{k}_i; \xi'_{i,j}) - \nabla f(\vx'^{k}_i; \xi_{i,j})\| \nonumber \\
  \leq & (1-\alpha^k\mu) \frac{1}{|\RR|} \sum_{n \in \RR} \| \vx_n^{k} - \vx_n'^{k} \|  + \frac{2 \alpha^k M}{|\RR|}.  \nonumber
\end{align}
Substituting (\ref{ge-d-9}) and (\ref{ge-d-6}) into (\ref{ge-d-1}), in this case we have
\begin{align}
\label{ge-d-10}
    \E \eta^{k+1} \leq &  (1-\alpha^k\mu)  \E \eta^{k} + \frac{2 \alpha^k M}{|\RR|}   \\
    &+  4\rho |\RR| (\E \sqrt{H^k}+ 2\alpha^k M).  \nonumber
\end{align}

Combining the two cases of (\ref{ge-d-8}) and (\ref{ge-d-10}), we can obtain
\begin{align}
\label{ge-d-11}
    \E \eta^{k+1} \leq &  (1-\alpha^k\mu)  \E \eta^{k} + \frac{2 \alpha^k M }{Z|\RR|} \\
    &+  4\rho |\RR| (\E \sqrt{H^k}+ 2\alpha^k M).  \nonumber
\end{align}

According to Lemma \ref{lemma-dm}, if we choose $\alpha^k = \frac{1}{\mu (k+k_0)}$, where $k_0$ is sufficiently large, $\E \sqrt{H^k} $ is bounded by
\begin{align}
    \label{ge-d-12}
    &\E \sqrt{H^{k}}
        \le  \frac{\sqrt{c}M}{\mu(k+k_0)}.
\end{align}
Substituting (\ref{ge-d-12}) into (\ref{ge-d-11}), we then derive
\begin{align}
\label{ge-d-13}
    \E \eta^{k+1}
    \leq &    (1-\frac{1}{k+k_0})  \E \eta^{k}  \\
    & +  \frac{2M}{\mu Z|\RR| (k+k_0)}
     +      \frac{4a_2 \rho |\RR| M}{\mu(k+k_0)}.  \nonumber
\end{align}
where
$a_2 =\sqrt{c}+2>0   $ is a constant.

Using telescopic cancellation on \eqref{ge-d-13} from time $0$ to $k$, we deduce that


\begin{align}
\label{ge-d-14}
    \E \eta^{k}
    \leq&   \sum_{k'=0}^{k-1} \lp \prod_{t=k'+1}^{k-1}(1-\frac{1}{t+k_0})  \rp \cdot \\&  \left[ \frac{2M}{\mu Z|\RR| (k'+k_0)}
    +  \frac{4a_2\rho |\RR| M}{\mu(k'+k_0)}  \right]  \nonumber \\
        \leq&   \sum_{k'=0}^{k-1} \frac{k'+k_0}{k+k_0-1}   \left[ \frac{2M}{\mu Z|\RR| (k'+k_0)}
    +    \frac{4a_2 \rho |\RR| M}{\mu(k'+k_0)}   \right] \nonumber \\
        \leq &   \frac{2M}{\mu Z|\RR| } + \frac{4a_2\rho |\RR|  M}{\mu}.   \nonumber
\end{align}
According to $\E \delta^k \leq \E \eta^k$ and Assumption \ref{assumption:gradients}, we have
\begin{align}
    \label{ge-d-16}
    \E [f(\bar\vx^k;\xi)-f(\bar\vx'^k;\xi)] \leq M \E \delta^k \leq M \E \eta^k.
\end{align}
Substituting (\ref{ge-d-16}) into (\ref{ge-d-14}), according to Lemma \ref{lemma-sta}, we have
\begin{align}
\label{ge-d-17}
    \E_{\cS,\cL} [F(\bar\vx^k)-F_{\cS}(\bar\vx^k)]
    \leq&    \frac{2M^2}{\mu Z|\RR| } + \frac{4a_2\rho |\RR|  M^2}{\mu},
\end{align}
which completes the proof.

\subsection{Row Stochastic Virtual Mixing Matrix}
As previously mentioned, analyzing the recursion of $\eta^k$ is not suitable when the virtual mixing matrix is row stochastic. In this simulation, we directly analyze the recursion of $\delta^{k}$.
We decompose $\delta^{k+1}$ into three parts, as
\begin{align}
\label{ge-1}
& \delta^{k+1}
= \|\bar\vx^{k+1}-\bar\vx'^{k+1}\| \\
\leq & \|\bar\vx^{k+1} - \bar\vx^{k+\frac{1}{2}} \| +  \|\bar\vx^{k+\frac{1}{2}} - \bar\vx'^{k+\frac{1}{2}} \| + \|\bar\vx'^{k+1} - \bar\vx'^{k+\frac{1}{2}} \|, \nonumber
\end{align}
where $\bar\vx^{k+\frac{1}{2}} = \frac{1}{|\RR|}\sum_{n \in \RR}\vx_n^{k+\frac{1}{2}}$ and $\bar\vx'^{k+\frac{1}{2}} = \frac{1}{|\RR|}\sum_{n \in \RR}$ $\vx'^{k+\frac{1}{2}}_n$.  For the first term at the RHS of (\ref{ge-1}), we bound it by
\begin{align}
    \label{ge-2}
    &\|\bar\vx^{k+1} - \bar\vx^{k+\frac{1}{2}} \|\\
=& \| \frac{1}{|\RR|}\sum_{n \in \RR}\vx_n^{k+1} - \frac{1}{|\RR|}\sum_{n \in \RR}\vx_n^{k+\frac{1}{2}} \| \nonumber\\
\leq& \| \frac{1}{|\RR|} \sum_{n \in \RR}(\vx_n^{k+1} - \hat\vx_n^{k+\frac{1}{2}}) \| + \| \frac{1}{|\RR|} \sum_{n \in \RR} (\hat\vx_n^{k+\frac{1}{2}} - \vx_n^{k+\frac{1}{2}})\|. \nonumber
\end{align}

The first term at the RHS of (\ref{ge-2}) can be handled in the same way as deriving (\ref{ge-d-6}).
For the second term at the RHS of (\ref{ge-2}), it holds that
\begin{align}
    \label{ge-4}
    &\| \frac{1}{|\RR|} \sum_{n \in \RR} (\hat\vx_n^{k+\frac{1}{2}} - \vx_n^{k+\frac{1}{2}})\| \\
     =&  \lnorm \frac{1}{\vert \RR \vert} \bm{1}^\top (WX^{k+\frac{1}{2}}-\frac{1}{\vert \RR \vert}\bm{1}\bm{1}^{\top}X^{k+\frac{1}{2}})
        \rnorm
          \nonumber\\
        =& \frac{1}{\vert \RR \vert}  \lnorm (\bm{1}^\top W-\bm{1}^{\top})
        (X^{k+\frac{1}{2}}-\frac{1}{\vert \RR \vert}\bm{1}\bm{1}^{\top}X^{k+\frac{1}{2}})
        \rnorm
         \nonumber\\
        \le& \frac{1}{\vert \RR \vert} \lnorm W^\top\bm{1} -\bm{1}
        \rnorm
        \lnorm X^{k+\frac{1}{2}}-\frac{1}{\vert \RR \vert}\bm{1}\bm{1}^{\top}X^{k+\frac{1}{2}}
        \rnorm_F
          \nonumber\\
		=& \frac{\chi}{\sqrt{\vert \RR \vert}}\sum_{n\in\RR} \|  \vx^{k+\frac{1}{2}}_{n}-\bar\vx^{k+\frac{1}{2}}\|.   \nonumber
\end{align}



Substituting (\ref{ge-d-6}) and (\ref{ge-4}) into (\ref{ge-2}), we obtain
\begin{align}
    \label{ge-6}
    &\E \|\bar\vx^{k+1} - \bar\vx^{k+\frac{1}{2}} \| \\
    \leq & (2\rho |\RR| +\chi \sqrt{|\RR|})(\E \sqrt{H^k}+ 2\alpha^k M).  \nonumber
\end{align}
Note that for the third term at the RHS of (\ref{ge-1}), an inequality similar to (\ref{ge-6}) also holds true.

Now, we analyze the second term at the RHS of (\ref{ge-1}). For the non-Byzantine agent $i$, the probability of Algorithm \ref{robust-DSGD} selecting the same training sample from  $\cS_i$ and $\cS'_i$ at time $k$ is $1-\frac{1}{Z}$. According to Lemma 6 in \cite{sun2021stability}, if the loss is strongly convex and Assumption \ref{assumption:Lip} holds, choosing $\alpha^k \leq \frac{1}{L}$ yields
\begin{align}
    \label{ge-7}
          \hspace{-2em} & \E \|\bar\vx^{k+\frac{1}{2}} - \bar\vx'^{k+\frac{1}{2}} \|  \\
    \hspace{-1em} = &   \E \|\bar\vx^k -\frac{\alpha^k}{|\RR|} \sum_{m \in \RR} \nabla f(\vx^{k}_m; \xi_m^{k}) -\bar\vx'^k +\frac{\alpha^k}{|\RR|} \sum_{m \in \RR} \nabla f(\vx'^{k}_m; \xi_m^{k}) \|  \nonumber \\
   \hspace{-1em} \leq& \E  \|\bar\vx^k -\alpha^k \nabla f(\bar\vx^{k}) -\bar\vx'^k + \alpha^k\nabla f(\bar\vx'^{k}) \|  \nonumber  \\
   \hspace{-1em} & + \E \|\alpha^k \nabla f(\bar\vx^{k}) - \frac{\alpha^k}{|\RR|} \sum_{m \in \RR} \nabla f(\vx^{k}_m; \xi_m^{k}) \| \nonumber  \\
   \hspace{-1em} & + \E \| \alpha^k\nabla f(\bar\vx'^{k}) - \frac{\alpha^k}{|\RR|} \sum_{m \in \RR} \nabla f(\vx'^{k}_m; \xi_m^{k}) \|   \nonumber  \\
   \hspace{-1em}  \leq&    (1-\alpha^k\mu)  \| \bar\vx^k - \bar\vx'^k\|  + 2 \alpha^k L \E \sqrt{H^k}, \nonumber
\end{align}
where the last inequality holds true because
    \begin{align}
    \label{ge-7-1}
    & \E \|\alpha^k \nabla f(\bar\vx^{k}) - \frac{\alpha^k}{|\RR|} \sum_{m \in \RR} \nabla f(\vx^{k}_m; \xi_m^{k}) \| \\
    \leq & \alpha^k \E \| \frac{1}{|\RR|} \sum_{m \in \RR} (\nabla f(\bar\vx^{k}) - \nabla f(\vx^{k}_m; \xi_m^{k})) \| \nonumber  \\
    \leq & \frac{\alpha^k}{|\RR|} \sum_{m \in \RR} \E \|\nabla f(\bar\vx^{k}) - \nabla f(\vx^{k}_m; \xi_m^{k}) \| \nonumber  \\
    \leq &  \frac{\alpha^k L}{|\RR|} \sum_{m \in \RR} \E \|\bar\vx^{k} - \vx^{k}_m\| \nonumber  \\
    \leq & \alpha^k L \E \sqrt{H^k}. \nonumber
\end{align}
Substituting (\ref{ge-6}) and (\ref{ge-7}) into (\ref{ge-1}), in this case we have
\begin{align}
\label{ge-8}
    \E \delta^{k+1} \leq &  (1-\alpha^k\mu)  \E \delta^{k} + 2 \alpha^k L \E \sqrt{H^k} \\
    &+  (4\rho |\RR| +2\chi\sqrt{|\RR|})(\E \sqrt{H^k}+ 2\alpha^k M).  \nonumber
\end{align}

For the non-Byzantine agent $i$, the probability of Algorithm \ref{robust-DSGD} selecting the different training samples from  $\cS_i$ and $\cS'_i$ at time $k$ is $\frac{1}{Z}$. By Lemma 6 in \cite{sun2021stability}, if the loss is strongly convex and Assumption \ref{assumption:Lip} holds, choosing $\alpha^k \leq \frac{1}{L}$ yields
\begin{align}
    \label{ge-9}
   &\|\bar\vx^{k+\frac{1}{2}} - \bar\vx'^{k+\frac{1}{2}} \|   \\
    =& \frac{1}{|\RR|} \|\vx_i^{k+\frac{1}{2}}-\vx'^{k+\frac{1}{2}}_i +  \sum_{n \in \RR/\{i\}}(\vx^{k+\frac{1}{2}}_n-\vx'^{k+\frac{1}{2}}_n)\|  \nonumber \\
    =& \frac{1}{|\RR|} \| \vx_i^k -\alpha^k \nabla f(\vx^{k}_i; \xi_{i,j})-\vx'^k_i + \alpha^k \nabla f(\vx'^{k}_i; \xi'_{i,j}) \nonumber\\
    &+  \sum_{n \in \RR/\{i\}}(\vx^{k+\frac{1}{2}}_n-\vx'^{k+\frac{1}{2}}_n) \|  \nonumber\\
    \leq&  \frac{1}{|\RR|} \| \vx_i^k -\alpha^k \nabla f(\vx^{k}_i; \xi_{i,j})-\vx'^k_i + \alpha^k \nabla f(\vx'^{k}_i; \xi_{i,j}) \nonumber \\
    & +  \sum_{n \in \RR/\{i\}}(\vx^{k+\frac{1}{2}}_n-\vx'^{k+\frac{1}{2}}_n) \|  \nonumber \\
    & + \frac{\alpha^k}{|\RR|}  \| \nabla f(\vx'^{k}_i; \xi'_{i,j}) - \nabla f(\vx'^{k}_i; \xi_{i,j})\| \nonumber\\
    \leq&  (1-\alpha^k\mu)  \| \bar\vx^k - \bar\vx'^k\| + 2 \alpha^k L \E \sqrt{H^k} + \frac{2 \alpha^k M}{|\RR|}.  \nonumber
\end{align}
Substituting (\ref{ge-9}) and (\ref{ge-7}) into (\ref{ge-1}), in this case we have
\begin{align}
\label{ge-10}
    \E \delta^{k+1} \leq &  (1-\alpha^k\mu)  \E \delta^{k} + 2 \alpha^k L \E \sqrt{H^k} + \frac{2 \alpha^k M}{|\RR|}   \\
    &+  (4\rho |\RR| +2\chi\sqrt{|\RR|})(\E \sqrt{H^k}+ 2\alpha^k M).  \nonumber
\end{align}

Combining the two cases of (\ref{ge-8}) and (\ref{ge-10}), we can obtain
\begin{align}
\label{ge-11}
    \E \delta^{k+1} \leq &  (1-\alpha^k\mu)  \E \delta^{k} + 2 \alpha^k L \E \sqrt{H^k} + \frac{2 \alpha^k M }{Z|\RR|} \\
    &+  (4\rho |\RR| +2\chi\sqrt{|\RR|})(\E \sqrt{H^k}+ 2\alpha^k M).  \nonumber
\end{align}


If we choose $\alpha^k = \frac{1}{\mu (k+k_0)}$, where $k_0$ is sufficiently large, substituting (\ref{ge-d-12}) into (\ref{ge-11}), we then derive
\begin{align}
\label{ge-13}
    &\E \delta^{k+1}
    \leq   (1-\frac{1}{k+k_0})  \E \delta^{k} +  \frac{2M}{\mu Z|\RR| (k+k_0)} \\
    &  + \frac{2 \sqrt{c} M L}{\mu^2 (k+k_0)^2}   +  (4\rho |\RR| +2\chi\sqrt{|\RR|})
         \frac{a_2 M}{\mu(k+k_0)}. \nonumber
\end{align}

Using telescopic cancellation on \eqref{ge-13} from time $0$ to $k$, we deduce that
\begin{align}
\label{ge-14}
    &\E \delta^{k}
    \leq    \sum_{k'=0}^{k-1} \lp \prod_{t=k'+1}^{k-1}(1-\frac{1}{t+k_0}) \rp  \Bigg[ \frac{2\sqrt{c} M L}{\mu^2 (k'+k_0)^2}   \\ & +\frac{2M}{\mu Z|\RR| (k'+k_0)}
    +  (4\rho |\RR| +2\chi\sqrt{|\RR|})
         \frac{a_2 M}{\mu(k'+k_0)}   \Bigg] \nonumber \\
        \leq &   \sum_{k'=0}^{k-1} \frac{k'+k_0}{k+k_0-1}   \Bigg[ \frac{2\sqrt{c} M L}{\mu^2 (k'+k_0)^2} \nonumber\\ & +\frac{2M}{\mu Z|\RR| (k'+k_0)}
    +  (4\rho |\RR| +2\chi\sqrt{|\RR|})
        \frac{a_2 M}{\mu(k'+k_0)}   \Bigg] \nonumber \\
        &\leq   \frac{2M}{\mu Z|\RR| } + \frac{a_2(4\rho |\RR| +2\chi\sqrt{|\RR|}) M}{\mu}
       + \frac{2\sqrt{c} MLln(k+k_0)}{\mu^2(k+k_0-1) }.  \nonumber
\end{align}
According to Assumption \ref{assumption:gradients}, we have
\begin{align}
    \label{ge-16}
    \E [f(\bar\vx^k;\xi)-f(\bar\vx'^k;\xi)] \leq M \E \delta^k.
\end{align}
Substituting (\ref{ge-16}) into (\ref{ge-14}), according to Lemma \ref{lemma-sta}, we have
\begin{align}
    \label{ge-17}
    &\E_{\cS,\cL} [F(\bar\vx^k)-F_{\cS}(\bar\vx^k)]  \\
    \leq &   \frac{2M^2}{\mu Z|\RR| } + \frac{a_2(4\rho |\RR| +2\chi\sqrt{|\RR|}) M^2}{\mu}
         + \frac{2\sqrt{c}M^2Lln(k+k_0)}{\mu^2(k+k_0-1) }, \nonumber
\end{align}
which completes the proof.
\end{proof}

\section{Proof of Theorem \ref{thm-ge-c}}
\subsection{Doubly Stochastic Virtual Mixing Matrix}
\begin{proof}
When the virtual mixing matrix is doubly stochastic, it is convenient to analyze $\eta^{k} = \frac{1}{|\RR|} \sum_{n \in \RR} \|\vx_n^{k} - \vx'^{k}_n \|$ instead of $\delta^{k}$, as $\delta^{k} \leq \eta^{k}$. We decompose $\eta^{k+1}$ into three parts as shown in (\ref{ge-d-1}) and analyze the first and third terms following the same approach as in (\ref{ge-d-3})--(\ref{ge-d-6}).

Now, we analyze the second term at the RHS of (\ref{ge-d-1}). For the non-Byzantine agent $i$, the probability of Algorithm \ref{robust-DSGD} selecting the same training sample from  $\cS_i$ and $\cS'_i$ at time $k$ is $1-\frac{1}{Z}$. According to Lemma 6 in \cite{sun2021stability}, if the loss is convex and Assumption \ref{assumption:Lip} holds, choosing $\alpha^k \leq \frac{2}{L}$ yields
\begin{align}
\label{ge-dc-1}
& \frac{1}{|\RR|} \sum_{n \in \RR} \|\hat\vx_n^{k+\frac{1}{2}} - \hat\vx_n'^{k+\frac{1}{2}} \|  \\
    = & \frac{1}{|\RR|} \sum_{n \in \RR} \|\sum_{m \in \RR} w_{nm} \lp \vx_m^{k+\frac{1}{2}} - \vx_m'^{k+\frac{1}{2}} \rp\| \nonumber\\
    \leq & \frac{1}{|\RR|} \sum_{n \in \RR} \sum_{m \in \RR} w_{nm}  \| \vx_m^{k+\frac{1}{2}} - \vx_m'^{k+\frac{1}{2}} \| \nonumber\\
    = & \frac{1}{|\RR|} \sum_{n \in \RR} \| \vx_n^{k+\frac{1}{2}} - \vx_n'^{k+\frac{1}{2}} \|  \nonumber\\
    \leq &   \frac{1}{|\RR|} \sum_{n \in \RR} \| \vx_n^{k} - \vx_n'^{k} \|. \nonumber
\end{align}
Note that the second equality holds true only when the virtual mixing matrix is doubly stochastic. When the virtual mixing matrix is row stochastic but not column stochastic, we are no longer able to analyze the recursion of $\eta^k$ and a different analytical approach is required. Substituting (\ref{ge-d-6}) and (\ref{ge-dc-1}) into (\ref{ge-d-1}), in this case we have
\begin{align}
\label{ge-dc-2}
   \E \eta^{k+1} \leq &  \E \eta^{k}
    +  4\rho |\RR| (\E \sqrt{H^k}+ 2\alpha^k M).
\end{align}

For the non-Byzantine agent $i$, the probability of Algorithm \ref{robust-DSGD} selecting the different training samples from  $\cS_i$ and $\cS'_i$ at time $k$ is $\frac{1}{Z}$. By Lemma 6 in \cite{sun2021stability}, if the loss is convex and Assumption \ref{assumption:Lip} holds, choosing $\alpha^k \leq \frac{2}{L}$ yields

\begin{align}
    \label{ge-dc-3}
  & \frac{1}{|\RR|} \sum_{n \in \RR} \|\hat\vx_n^{k+\frac{1}{2}} - \hat\vx_n'^{k+\frac{1}{2}} \| \\
  \leq  & \frac{1}{|\RR|} \sum_{n \in \RR} \| \vx_n^{k+\frac{1}{2}} - \vx_n'^{k+\frac{1}{2}} \|  \nonumber\\
  \leq & \frac{1}{|\RR|} \sum_{n \in \RR} \| \vx_n^{k} - \vx_n'^{k} \|  + \frac{2 \alpha^k M}{|\RR|}.  \nonumber
\end{align}
Substituting (\ref{ge-d-6}) and (\ref{ge-dc-3}) into (\ref{ge-d-1}), in this case we have
\begin{align}
\label{ge-dc-4}
    \E \eta^{k+1} \leq   \E \eta^{k} + \frac{2 \alpha^k M}{|\RR|}
    +  4\rho |\RR| (\E \sqrt{H^k}+ 2\alpha^k M).
\end{align}

Combining the two cases of (\ref{ge-dc-2}) and (\ref{ge-dc-4}), we can obtain
\begin{align}
\label{ge-dc-5}
    \E \eta^{k+1} \leq &   \E \eta^{k} + \frac{2 \alpha^k M }{Z|\RR|}
    +  4\rho |\RR| (\E \sqrt{H^k}+ 2\alpha^k M).
\end{align}

If we choose $\alpha^k = \frac{1}{ k+k_0}$, where $ k_0$ is sufficiently large, we then derive
\begin{align}
\label{ge-dc-6}
    \E \eta^{k+1}
    \leq &      \E \eta^{k}
     +  \frac{2M}{ Z|\RR| (k+k_0)}
     +      \frac{4a_2 \rho |\RR| M}{k+k_0}.
\end{align}

Using telescopic cancellation on \eqref{ge-dc-6} from time $0$ to $k$, we deduce that

\begin{align}
\label{ge-dc-7}
    \E \eta^{k}
    \leq & \lp \frac{2M}{ Z|\RR|} + 4a_2 \rho |\RR| M \rp \sum_{k'=0}^{k-1} \frac{1}{k'+k_0}  \\
    \leq & \lp \frac{2M}{ Z|\RR|} + 4a_2 \rho |\RR| M \rp  ln(k+k_0). \nonumber
\end{align}
Substituting  (\ref{ge-d-16}) into (\ref{ge-dc-7}), according to Lemma \ref{lemma-sta}, we have
\begin{align}
\label{ge-dc-8}
    & \E_{\cS,\cL} [F(\bar\vx^k)-F_{\cS}(\bar\vx^k)] \\
    \leq  &\lp \frac{2M^2}{ Z|\RR| } + 4a_2\rho |\RR|  M^2  \rp  ln(k+k_0),\nonumber
\end{align}
which completes the proof.

\subsection{Row Stochastic Virtual Mixing Matrix}
As previously mentioned, analyzing the recursion of $\eta^k$ is not suitable when the virtual mixing matrix is row stochastic. In this simulation, we directly analyze the recursion of $\delta^{k}$. We decompose $\delta^{k+1}$ into three parts as shown in (\ref{ge-1}), and can analyze the first and third terms following the same approach as in (\ref{ge-2})--(\ref{ge-6}).

Now, we analyze the second term at the RHS of (\ref{ge-1}). For the non-Byzantine agent $i$, the probability of Algorithm \ref{robust-DSGD} selecting the same training sample from  $\cS_i$ and $\cS'_i$ at time $k$ is $1-\frac{1}{Z}$. According to Lemma 6 in \cite{sun2021stability}, if the loss is convex and Assumption \ref{assumption:Lip} holds, choosing $\alpha^k \leq \frac{2}{L}$ yields
\begin{align}
    \label{ge-c-1}
    \|\bar\vx^{k+\frac{1}{2}} - \bar\vx'^{k+\frac{1}{2}} \|
     \leq    \| \bar\vx^k - \bar\vx'^k\|  + 2 \alpha^k L \E \sqrt{H^k}.
\end{align}
Substituting (\ref{ge-6}) and (\ref{ge-c-1}) into (\ref{ge-1}), in this case we have
\begin{align}
\label{ge-c-2}
     \E \delta^{k+1}
    \leq &\E \delta^{k} + 2 \alpha^k L \E \sqrt{H^k}  \\
  &+  (4\rho |\RR| +2\chi\sqrt{|\RR|})(\E \sqrt{H^k}+ 2\alpha^k M).  \nonumber
\end{align}

For the non-Byzantine agent $i$, the probability of Algorithm \ref{robust-DSGD} selecting the different training samples from  $\cS_i$ and $\cS'_i$ at time $k$ is $\frac{1}{Z}$. By Lemma 6 in \cite{sun2021stability}, if the loss is convex and Assumption \ref{assumption:Lip} holds, choosing $\alpha^k \leq \frac{2}{L}$ yields
\begin{align}
     \label{ge-c-3}
   &\|\bar\vx^{k+\frac{1}{2}} - \bar\vx'^{k+\frac{1}{2}} \|   \\
    \leq&  \| \bar\vx^k - \bar\vx'^k\| + \frac{2 \alpha^k M}{|\RR|} + 2 \alpha^k L \E \sqrt{H^k} . \nonumber
\end{align}
Substituting (\ref{ge-6}) and (\ref{ge-c-3}) into (\ref{ge-1}), in this case we have
\begin{align}
\label{ge-c-4}
    \E \delta^{k+1} \leq &  \E \delta^{k} + \frac{2 \alpha^k M }{|\RR|} + 2 \alpha^k L \E \sqrt{H^k}\\
    &+  (4\rho |\RR| +2\chi\sqrt{|\RR|})(\E \sqrt{H^k}+ 2\alpha^k M) . \nonumber
\end{align}

Combining the two cases of (\ref{ge-c-2}) and (\ref{ge-c-4}), we can obtain
\begin{align}
\label{ge-c-5}
    \E \delta^{k+1} \leq &  \E \delta^{k} + \frac{2 \alpha^k M }{Z|\RR|} + 2 \alpha^k L \E \sqrt{H^k} \\
    &+  (4\rho |\RR| +2\chi\sqrt{|\RR|})(\E \sqrt{H^k}+ 2\alpha^k M).  \nonumber
\end{align}

If we choose $\alpha^k = \frac{1}{ k+k_0}$, where $k_0$ is sufficiently large,  we then derive
\begin{align}
\label{ge-c-6}
    \E \delta^{k+1}
    \leq&    \E \delta^{k} + \frac{2 \sqrt{c} M L}{ (k+k_0)^2}
    + \frac{2M}{ Z|\RR| (k+k_0)}  \\ &+  (4\rho |\RR| +2\chi\sqrt{|\RR|})
    \frac{a_2 M}{k+k_0}.  \nonumber
\end{align}

 Using telescopic cancellation on \eqref{ge-c-6} from time $0$ to $k$, we deduce that

\begin{align}
\label{ge-c-7}
    &\E \delta^{k} \\
    \leq & \lp \frac{2M}{ Z|\RR|} + a_2(4\rho |\RR| +2\chi\sqrt{|\RR|})M \rp \sum_{k'=0}^{k-1} \frac{1}{k'+k_0} \nonumber \\
    & + 2\sqrt{c} M L \sum_{k'=0}^{k-1} \frac{1}{(k'+k_0)^2} \nonumber \\
    \leq & \lp \frac{2M}{ Z|\RR|} + a_2(4\rho |\RR| +2\chi\sqrt{|\RR|})M \rp  ln(k+k_0) \nonumber \\
    & + 2\sqrt{c} M L. \nonumber
\end{align}
Substituting (\ref{ge-16}) into (\ref{ge-c-7}), according to Lemma \ref{lemma-sta}, we have
\begin{align}
    \label{ge-c-10}
   & \E_{\cS,\cL} [F(\bar\vx^k)-F_{\cS}(\bar\vx^k)]
    \leq 2 \sqrt{c} M^2 L  \\ &+ \lp \frac{2M^2}{ Z|\RR|} + a_2(4\rho |\RR| +2\chi\sqrt{|\RR|})M^2 \rp  ln(k+k_0),  \nonumber
\end{align}
which completes the proof.
\end{proof}

\section{Proof of Theorem \ref{thm-ge-n}}
\subsection{Doubly Stochastic Virtual Mixing Matrix}
\begin{proof}
When the virtual mixing matrix is doubly stochastic, it is convenient to analyze $\eta^{k} = \frac{1}{|\RR|} \sum_{n \in \RR} \|\vx_n^{k} - \vx'^{k}_n \|$ instead of $\delta^{k}$, as $\delta^{k} \leq \eta^{k}$. We decompose $\eta^{k+1}$ into three parts as shown in (\ref{ge-d-1}) and analyze the first and third terms following the same approach as in (\ref{ge-d-3})--(\ref{ge-d-6}).

Now, we analyze the second term at the RHS of (\ref{ge-d-1}). For the non-Byzantine agent $i$, the probability of Algorithm \ref{robust-DSGD} selecting the same training sample from  $\cS_i$ and $\cS'_i$ at time $k$ is $1-\frac{1}{Z}$. By Lemma 6 in \cite{sun2021stability}, if Assumption \ref{assumption:Lip} holds, we have

\begin{align}
\label{ge-dn-1}
& \frac{1}{|\RR|} \sum_{n \in \RR} \|\hat\vx_n^{k+\frac{1}{2}} - \hat\vx_n'^{k+\frac{1}{2}} \|  \\
    = & \frac{1}{|\RR|} \sum_{n \in \RR} \|\sum_{m \in \RR} w_{nm} \lp \vx_m^{k+\frac{1}{2}} - \vx_m'^{k+\frac{1}{2}} \rp\| \nonumber\\
    \leq & \frac{1}{|\RR|} \sum_{n \in \RR} \sum_{m \in \RR} w_{nm}  \| \vx_m^{k+\frac{1}{2}} - \vx_m'^{k+\frac{1}{2}} \| \nonumber\\
    = & \frac{1}{|\RR|} \sum_{n \in \RR} \| \vx_n^{k+\frac{1}{2}} - \vx_n'^{k+\frac{1}{2}} \|  \nonumber\\
    \leq &   \frac{1+\alpha^k L}{|\RR|} \sum_{n \in \RR} \| \vx_n^{k} - \vx_n'^{k} \| .\nonumber
\end{align}
Note that the second equality holds true only when the virtual mixing matrix is doubly stochastic. When the virtual mixing matrix is row stochastic but not column stochastic, we are no longer able to analyze the recursion of $\eta^k$ and a different analytical approach is required. Substituting (\ref{ge-d-6}) and (\ref{ge-dn-1}) into (\ref{ge-d-1}), in this case we have
\begin{align}
\label{ge-dn-2}
   \E \eta^{k+1} \leq &  (1+\alpha^k L) \E \eta^{k}
    +  4\rho |\RR| (\E \sqrt{H^k}+ 2\alpha^k M).
\end{align}

For the non-Byzantine agent $i$, the probability of Algorithm \ref{robust-DSGD} selecting the different training samples from  $\cS_i$ and $\cS'_i$ at time $k$ is $\frac{1}{Z}$. By Lemma 6 in \cite{sun2021stability}, if Assumption \ref{assumption:Lip} holds, we have

\begin{align}
    \label{ge-dn-3}
  & \frac{1}{|\RR|} \sum_{n \in \RR} \|\hat\vx_n^{k+\frac{1}{2}} - \hat\vx_n'^{k+\frac{1}{2}} \| \\
  \leq  & \frac{1}{|\RR|} \sum_{n \in \RR} \| \vx_n^{k+\frac{1}{2}} - \vx_n'^{k+\frac{1}{2}} \|  \nonumber\\
  \leq & \frac{1+\alpha^k L}{|\RR|} \sum_{n \in \RR} \| \vx_n^{k} - \vx_n'^{k} \|  + \frac{2 \alpha^k M}{|\RR|} . \nonumber
\end{align}
Substituting (\ref{ge-d-6}) and (\ref{ge-dn-3}) into (\ref{ge-d-1}), in this case we have
\begin{align}
\label{ge-dn-4}
    \E \eta^{k+1} \leq&   (1+\alpha^k L) \E \eta^{k} + \frac{2 \alpha^k M}{|\RR|}   \\
    &+  4\rho |\RR| (\E \sqrt{H^k}+ 2\alpha^k M) . \nonumber
\end{align}

Combining the two cases of (\ref{ge-dn-2}) and (\ref{ge-dn-4}), we can obtain
\begin{align}
\label{ge-dn-5}
    \E \eta^{k+1} \leq &  (1+\alpha^k L) \E \eta^{k} + \frac{2 \alpha^k M }{Z|\RR|} \\
   & +  4\rho |\RR| (\E \sqrt{H^k}+ 2\alpha^k M) . \nonumber
\end{align}

If we choose $\alpha^k = \frac{1}{L (k+k_0)}$, where $k$ is sufficiently large, we then derive
\begin{align}
\label{ge-dn-6}
    \E \eta^{k+1}
    \leq &     \lp 1+\frac{1}{k+k_0} \rp \E \eta^{k}
     +  \frac{2M}{L Z|\RR| (k+k_0)}   \\
     & +      \frac{4a_2 \rho |\RR| M}{L(k+k_0)}  .\nonumber
\end{align}

 Using telescopic cancellation on \eqref{ge-dn-6} from time $0$ to $k$, we deduce that

\begin{align}
\label{ge-dn-7}
    &\E \eta^{k}  \\
    \leq&   \sum_{k'=0}^{k-1} \lp \prod_{t=k'+1}^{k-1}(1+\frac{1}{t+k_0}) \rp \left[ \frac{2M}{L Z|\RR| (k'+k_0)}
    +
        \frac{4a_2 \rho |\RR| M}{L(k'+k_0)}   \right] \nonumber \\
        \overset{(a)}{\leq} &   \sum_{k'=0}^{k-1} \lp \prod_{t=k'+1}^{k-1} e^{ \frac{1}{t+k_0}} \rp \left[ \frac{2M}{L Z|\RR| (k'+k_0)}
    +
         \frac{4a_2 \rho |\RR| M}{L(k'+k_0)}   \right] \nonumber \\
        \leq &  \sum_{k'=0}^{k-1}  exp \lp  \sum_{t=k'+1}^{k-1} \frac{1}{t+k_0}\rp   \left[ \frac{2M}{L Z|\RR| (k'+k_0)}
    +
         \frac{4a_2 \rho |\RR| M}{L(k'+k_0)}   \right] \nonumber \\
       \overset{(b)}{\leq} &    \sum_{k'=0}^{k-1}  exp \lp  ln(\frac{k+k_0}{k'+k_0}) \rp    \left[ \frac{2M}{L Z|\RR| (k'+k_0)}
    +
        \frac{4a_2 \rho |\RR| M}{L(k'+k_0)}  \right] \nonumber \\
        \leq&  (k+k_0)  \sum_{k'=0}^{k-1}  (k'+k_0)^{-2}     \left[ \frac{2M}{L Z|\RR| }
    +
         \frac{4a_2 \rho |\RR| M}{L}   \right] \nonumber  \\
                \leq&  (k+k_0) \left[ \frac{2M}{L Z|\RR| }
    +
         \frac{4a_2 \rho |\RR| M}{L}   \right] ,\nonumber
\end{align}
where (a) uses $1+\vx \leq e^{\vx}$ and (b) holds true because
\begin{align}
\label{ge-dn-7-1}
&\sum_{t=k'+1}^{k-1} \frac{1}{t+k_0} =
  \sum_{t=k'+k_0}^{k+k_0-2} \frac{1}{t+1}
  \leq ln(\frac{k+k_0}{k'+k_0}).
\end{align}
Substituting (\ref{ge-d-16}) into (\ref{ge-dn-7}), according to Lemma \ref{lemma-sta}, we have
\begin{align}
\label{ge-dn-8}
  & \E_{\cS,\cL} [F(\bar\vx^k)-F_{\cS}(\bar\vx^k)] \\
    \leq  &\lp \frac{2M^2}{L Z|\RR| } + \frac{4a_2\rho |\RR|  M^2}{L}  \rp  (k+k_0), \nonumber
\end{align}
which completes the proof.

\subsection{Row Stochastic Virtual Mixing Matrix}
As previously mentioned, analyzing the recursion of $\eta^k$ is not suitable when the virtual mixing matrix is row stochastic. In this simulation, we directly analyze the recursion of $\delta^{k}$. We decompose $\delta^{k+1}$ into three parts as shown in (\ref{ge-1}) and analyze the first and third terms following the same approach as in (\ref{ge-2})--(\ref{ge-6}).

Now, we analyze the second term at the RHS of (\ref{ge-1}). For the non-Byzantine agent $i$, the probability of Algorithm \ref{robust-DSGD} selecting the same training sample from  $\cS_i$ and $\cS'_i$ at time $k$ is $1-\frac{1}{Z}$. By Lemma 6 in \cite{sun2021stability}, if Assumption \ref{assumption:Lip} holds, we have

\begin{align}
    \label{ge-n-1}
    & \|\bar\vx^{k+\frac{1}{2}} - \bar\vx'^{k+\frac{1}{2}} \| \\
     \leq  & (1+\alpha^k L)  \| \bar\vx^k - \bar\vx'^k\| + 2 \alpha^k L \E \sqrt{H^k}.  \nonumber
\end{align}
Substituting (\ref{ge-6}) and (\ref{ge-n-1}) into (\ref{ge-1}), in this case we have
\begin{align}
\label{ge-n-2}
     \E \delta^{k+1}
    \leq & (1+\alpha^k L)  \E \delta^{k}  + 2 \alpha^k L \E \sqrt{H^k} \\
  &+  (4\rho |\RR| +2\chi\sqrt{|\RR|})(\E \sqrt{H^k}+ 2\alpha^k M) .  \nonumber
\end{align}

For the non-Byzantine agent $i$, the probability of Algorithm \ref{robust-DSGD} selecting the different training samples from  $\cS_i$ and $\cS'_i$ at time $k$ is $\frac{1}{Z}$. By Lemma 6 in \cite{sun2021stability}, if Assumption \ref{assumption:Lip} holds, we have
\begin{align}
     \label{ge-n-3}
   &\|\bar\vx^{k+\frac{1}{2}} - \bar\vx'^{k+\frac{1}{2}} \|   \\
    \leq&  (1+\alpha^k L)\| \bar\vx^k - \bar\vx'^k\| + \frac{2 \alpha^k M}{|\RR|} + 2 \alpha^k L \E \sqrt{H^k} . \nonumber
\end{align}
Substituting (\ref{ge-6}) and (\ref{ge-n-3}) into (\ref{ge-1}), in this case we have
\begin{align}
\label{ge-n-4}
    \E \delta^{k+1} \leq &  (1+\alpha^k L) \E \delta^{k} + \frac{2 \alpha^k M }{|\RR|} + 2 \alpha^k L \E \sqrt{H^k}\\
    &+  (4\rho |\RR| +2\chi\sqrt{|\RR|})(\E \sqrt{H^k}+ 2\alpha^k M)  .\nonumber
\end{align}

Combining the two cases of (\ref{ge-n-2}) and (\ref{ge-n-4}), we can obtain
\begin{align}
\label{ge-n-5}
    \E \delta^{k+1} \leq &  (1+\alpha^k L) \E \delta^{k} + \frac{2 \alpha^k M }{Z|\RR|} + 2 \alpha^k L \E \sqrt{H^k} \\
    &+  (4\rho |\RR| +2\chi\sqrt{|\RR|})(\E \sqrt{H^k}+ 2\alpha^k M).  \nonumber
\end{align}

If we choose $\alpha^k = \frac{1}{L (k+k_0)}$, where $k_0$ is sufficiently large,  we then derive
\begin{align}
\label{ge-n-6}
    \E \delta^{k+1}
    \leq &  \lp 1+\frac{1}{ k+k_0} \rp  \E \delta^{k}  +  \frac{2M}{L Z|\RR| (k+k_0)}  \\
    &+\frac{2 \sqrt{c} M }{L (k+k_0)^2} +  (4\rho |\RR| +2\chi\sqrt{|\RR|})
         \frac{a_2 M}{L(k+k_0)}  . \nonumber
\end{align}

 Using telescopic cancellation on \eqref{ge-n-6} from time $0$ to $k$, we deduce that

\begin{align}
\label{ge-n-7}
    \E \delta^{k}
    \leq&   \sum_{k'=0}^{k-1} \lp \prod_{t=k'+1}^{k-1}(1+\frac{1}{t+k_0}) \rp  \cdot\Bigg[ \frac{2M}{L Z|\RR| (k'+k_0)}  \\ &
    + \frac{2 \sqrt{c} M }{L (k'+k_0)^2} + (4\rho |\RR| +2\chi\sqrt{|\RR|})
        \frac{a_2 M}{L(k'+k_0)}   \Bigg] \nonumber \\
        \leq&  (k+k_0)  \sum_{k'=0}^{k-1}  (k'+k_0)^{-2}     \cdot\Bigg[ \frac{2M}{L Z|\RR| }  \nonumber\\ &
    + \frac{2\sqrt{c} M }{L (k'+k_0)} + (4\rho |\RR| +2\chi\sqrt{|\RR|})
        \frac{a_2 M}{L}   \Bigg] \nonumber \\
        \leq&  (k+k_0)   \Bigg[ \frac{2M}{L Z|\RR| }
    + \frac{2 \sqrt{c} M }{L} + (4\rho |\RR| +2\chi\sqrt{|\RR|})
        \frac{a_2 M}{L}   \Bigg]. \nonumber
\end{align}





Substituting (\ref{ge-16}) into (\ref{ge-n-7}), according to Lemma \ref{lemma-sta}, we have
\begin{align}
    \label{ge-n-111}
   &  \E_{\cS,\mathcal{L}}[F(\mathcal{L}(\cS))-F_{\cS}(\mathcal{L}(\cS))]  \\
    \leq&     (k+k_0)   \Bigg[ \frac{2M^2}{L Z|\RR| }
    + \frac{2\sqrt{c} M^2 }{L} + (4\rho |\RR| +2\chi\sqrt{|\RR|})
        \frac{a_2 M^2}{L}   \Bigg], \nonumber
\end{align}
which completes the proof.
\end{proof}

\section{Improved generalization error analysis with non-convex loss}
\label{app-im}
Following \cite{bars2023improved} and \cite{hardt2016train}, we can improve the generalization error analysis in Theorem \ref{thm-ge-n} by considering the first time when the different training samples from $\cS$ and $\cS'$ are selected. We further assume that the non-convex loss $f(\vx; \xi)$ is within $[0,1]$, without which the established generalization error will depend on the supreme of $f(\vx; \xi)$.

\begin{theorem}[Generalization error of Byzantine-resilient DSGD with non-convex loss]
\label{thm-ge-in}
Suppose that the robust aggregation rules $\{\A_n\}_{n\in \RR}$ in Algorithm \ref{robust-DSGD} satisfy Definition \ref{definition:mixing-matrix}, the associated contraction constant satisfies $\rho < \rho^* := \frac{\beta}{8\sqrt{\vert \RR \vert}}$, and the loss $f(\vx; \xi)$ is non-convex and within $[0,1]$. Set the step size $\alpha^k= \frac{a}{L (k+k_0)}$, where $k_0$ is sufficiently large and $a>0$.
Under Assumptions \ref{assumption:connection}--\ref{assumption:indSampling}, at any given time $k$, the generalization error of Algorithm \ref{robust-DSGD} is bounded by
    \begin{align}
        \label{thm-ge-in-1}
        & \E_{\cS,\mathcal{L}}[F(\bar \vx^k)-F_{\cS}(\bar \vx^k)] \leq  \frac{c_2 1_{\chi \neq 0} M^2   }{L \Delta} \\
         &  + (\frac{1}{  Z}+\frac{1}{aZ}) \Delta^{\frac{1}{a+1}}   (k+k_0)^{\frac{a}{a+1}}. \nonumber
    \end{align}
     Here $c_1, c_2>0$ are constants and $$\Delta = \lp \frac{2aM^2}{L |\RR| } +  \frac{ 4ac_1\rho |\RR| Z M^2}{L} +  \frac{ 2a c_1\chi\sqrt{|\RR|} Z M^2}{L} \rp  . $$
\end{theorem}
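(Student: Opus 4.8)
The plan is to combine the stability-to-generalization reduction of Lemma \ref{lemma-sta} with the refinement of \cite{hardt2016train,bars2023improved}, which delays the divergence of the two runs until the differing sample is first drawn. Concretely, I would bound the uniform stability $\sup_\xi \E_{\cL}[f(\bar\vx^k;\xi)-f(\bar\vx'^k;\xi)]$, where $\cS$ and $\cS'$ differ only in the $j$-th sample of agent $i$. Let $\tau$ be the first time step at which agent $i$ draws this differing sample; by Assumption \ref{assumption:indSampling} it is drawn with probability $1/Z$ at each step, so $\Pr(\tau\le t_0)\le t_0/Z$. First I would split the expectation over $\{\tau\le t_0\}$ and $\{\tau>t_0\}$: on the former I use $f(\vx;\xi)\in[0,1]$ to bound the difference by $1$, and on the latter I use the bounded-gradient Lipschitz bound $|f(\bar\vx^k;\xi)-f(\bar\vx'^k;\xi)|\le M\delta^k$ as in (\ref{ge-16}). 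This yields $\sup_\xi\E[f-f']\le \tfrac{t_0}{Z}+M\,\E[\delta^k 1_{\tau>t_0}]$. The crucial observation is that on $\{\tau>t_0\}$ both runs use identical samples up to time $t_0$, hence are trajectory-wise identical and $\delta^{t_0}=0$ exactly, even though the triangle-inequality Byzantine drift bound is loose there.

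Next I would reuse the one-step recursions already established for the non-convex loss: (\ref{ge-n-5}) in the row-stochastic case and (\ref{ge-dn-5}) in the doubly-stochastic case. Multiplying by $1_{\tau>t_0}$ and taking expectations, and noting that the step-$t$ draw for $t\ge t_0$ is independent of $\{\tau>t_0\}$ (which constrains only steps $\le t_0$), gives for $t\ge t_0$ the recursion $\E[\delta^{t+1}1_{\tau>t_0}]\le (1+\alpha^t L)\,\E[\delta^t 1_{\tau>t_0}] + D_t$, started from $\E[\delta^{t_0}1_{\tau>t_0}]=0$. With $\alpha^t=\tfrac{a}{L(t+k_0)}$ and the disagreement bound $\E\sqrt{H^t}\le \sqrt{c}\,M\alpha^t$ from Lemma \ref{lemma-dm} (cf. (\ref{ge-d-12})), the drift groups as $D_t=\tfrac{A}{t+k_0}+\tfrac{B}{(t+k_0)^2}$ with $A=\tfrac{2aM}{LZ|\RR|}+(4\rho|\RR|+2\chi\sqrt{|\RR|})\tfrac{a_2 aM}{L}$ and $B=\tfrac{2\sqrt c\,a^2 M}{L}$, where the $B$-term (coming from the smoothness step $2\alpha^t L\E\sqrt{H^t}$) is present only in the row-stochastic case, which is exactly the factor $1_{\chi\neq0}$. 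The key algebraic identity is $MA=\Delta/Z$, which folds the Byzantine drift into $\Delta$ while producing the extra factor $Z$ on the $\rho$- and $\chi$-terms.

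Then I would solve the recursion by telescoping, using $\prod_{s=t+1}^{k-1}(1+\tfrac{a}{s+k_0})\le \big(\tfrac{k+k_0}{t+k_0}\big)^a$ together with $\sum_{t\ge t_0}(t+k_0)^{-a-1}\le\tfrac1a(t_0+k_0)^{-a}$ and $\sum_{t\ge t_0}(t+k_0)^{-a-2}\le\tfrac1{a+1}(t_0+k_0)^{-a-1}$, to obtain $M\E[\delta^k 1_{\tau>t_0}]\le \tfrac{\Delta}{aZ}\big(\tfrac{k+k_0}{t_0+k_0}\big)^a+\tfrac{MB}{a+1}\tfrac{(k+k_0)^a}{(t_0+k_0)^{a+1}}$. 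Adding the $\tfrac{t_0}{Z}$ term and minimizing the first two (dominant) terms over $t_0$ produces the balance $(t_0+k_0)^{a+1}=\Delta\,(k+k_0)^a$, i.e. $t_0+k_0=\Delta^{1/(a+1)}(k+k_0)^{a/(a+1)}$; at this choice the sampling-probability term and the accumulated-drift term collapse to $(\tfrac1Z+\tfrac1{aZ})\Delta^{1/(a+1)}(k+k_0)^{a/(a+1)}$, while the residual $B$-term evaluates to $\tfrac{MB}{(a+1)\Delta}=\tfrac{c_2 1_{\chi\neq0}M^2}{L\Delta}$ with $c_2=\tfrac{2\sqrt c\,a^2}{a+1}$. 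The doubly-stochastic case is identical with $B=0$ and $\chi=0$, so the constant term vanishes. Finally Lemma \ref{lemma-sta} converts this stability bound into the claimed generalization bound.

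I expect the main obstacle to be the bookkeeping around the conditioning. The delicate point is that the per-step recursions (\ref{ge-n-5})/(\ref{ge-dn-5}) bound the Byzantine aggregation drift of the two runs \emph{separately}, so they over-count when the runs coincide; the argument is only correct because I invoke the \emph{exact} equality $\delta^{t_0}=0$ on $\{\tau>t_0\}$ rather than the recursion for $t<t_0$, and I must check that multiplying by $1_{\tau>t_0}$, dropping the indicator inside $\E[\sqrt{H^t}1_{\tau>t_0}]\le\E\sqrt{H^t}$, and using independence of the later draws keeps the drift $D_t$ intact (with the $1/Z$ surviving only on the sampling part). A secondary technical wrinkle is the $t_0$-optimization: the minimizer must be an admissible integer with $t_0\le k$, so strictly I would restrict to the regime $\Delta^{1/(a+1)}(k+k_0)^{a/(a+1)}\le k+k_0$ and handle the boundary by rounding, and I would confirm that $\alpha^t\le 1/L$ holds for the chosen $a$ and sufficiently large $k_0$ so that Lemma 6 of \cite{sun2021stability} applies throughout.
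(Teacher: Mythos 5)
Your proposal is correct and follows essentially the same route as the paper's proof: the paper likewise invokes the splitting at the first time the differing sample is drawn (citing Lemma 3.11 of \cite{hardt2016train} rather than re-deriving it as you do), reuses the non-convex one-step recursions (\ref{ge-dn-5})/(\ref{ge-n-5}) started from $\delta^{\hat k}=0$, telescopes via $\prod(1+\tfrac{a}{t+k_0})\le(\tfrac{k+k_0}{\hat k})^a$ and the integral bound $\sum(k'+k_0)^{-a-1}\le\tfrac{1}{a}\hat k^{-a}$, and balances $\hat k = \Delta^{1/(a+1)}(k+k_0)^{a/(a+1)}$ to obtain exactly the claimed bound, with the residual $(t+k_0)^{-2}$ drift term yielding the $\tfrac{c_2 1_{\chi\neq 0}M^2}{L\Delta}$ constant in the row-stochastic case. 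Your extra care about the indicator/conditioning bookkeeping and the admissibility of the optimized $\hat k$ goes beyond what the paper spells out, but introduces no divergence in substance.
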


Again, in the case that $\rho=\chi=0$, the derived generalization error bound in Theorem \ref{thm-ge-n} is in the order of $O(\frac{ k^{\frac{a}{a+1}} }{ Z|\RR|^{\frac{1}{a+1}} })$.  Such a generalization error bound matches the ones obtained for DSGD \cite{bars2023improved}. It is notably tighter than those established in \cite{sun2021stability} and \cite{deng2023stability}, in which the bounds contain additional terms that do not vanish when $Z$ and $|\RR|$ go to infinity.
However, it does not precisely align with the generalization error bound obtained for SGD, which is in the order of $O(\frac{ k^{\frac{a}{a+1}} }{ Z|\RR| })$. This discrepancy arises because the probability of selecting the different training samples in each iteration is $\frac{1}{Z|\RR|}$ in SGD, whereas becomes $\frac{1}{|Z|}$ in DSGD, as discussed in \cite{bars2023improved}.

When the Byzantine agents are present such that the contraction constant $\rho \neq 0$ in general, the induced generalization error term is in the order of $O((\frac{ \rho |\RR| M^2 }{L})^{\frac{1}{a+1}} k^{\frac{a}{a+1}})$. Furthermore, a non-doubly stochastic virtual mixing matrix $W$, which implies $\chi$ $\neq 0$, yields two additional error terms in the orders of $O(\frac{M^2}{L})$ and $O((\frac{ \chi \sqrt{|\RR|} M^2 }{L})^{\frac{1}{a+1}} k^{\frac{a}{a+1}})$, respectively.

\subsection{Doubly Stochastic Virtual Mixing Matrix}
\begin{proof}
    According to Lemma 3.11 in \cite{hardt2016train}, for every $\hat k \in \{0,1,...,k \}$, we have
    \begin{align}
        \label{ge-idn-1}
        \E[ f(\bar\vx^k,\xi) - f(\bar\vx'^k,\xi)] \leq \frac{\hat k}{Z} + M \E [\eta^k | \eta^{\hat k}=0].
    \end{align}

In the same way as we have proved (\ref{ge-dn-6}), if we choose $\alpha^k =$ $\frac{a}{L (k+k_0)}$, for all $k \geq \hat k$ we have
\begin{align}
\label{ge-idn-2}
    \E [\eta^{k+1} | \eta^{\hat k}=0]
    \leq &     \lp 1+\frac{a}{k+k_0} \rp \E [\eta^{k} | \eta^{\hat k}=0] \\
     & +  \frac{2aM}{L Z|\RR| (k+k_0)}
      +      \frac{4 a a_2 \rho |\RR| M}{L(k+k_0)} . \nonumber
\end{align}
Since $\eta^{\hat k}=0$, using telescopic cancellation on \eqref{ge-idn-2} from time $\hat k$ to $k$, we deduce that
\begin{align}
\label{ge-idn-3}
    &\E [\eta^{k} | \eta^{\hat k}=0]  \\
        \leq &   \sum_{k'=\hat k}^{k-1} \lp \prod_{t=k'+1}^{k-1} e^{ \frac{a}{t+k_0}} \rp \left[ \frac{2aM}{L Z|\RR| (k'+k_0)}
    +
         \frac{4aa_2 \rho |\RR| M}{L(k'+k_0)}   \right] \nonumber \\
        \leq &  \sum_{k'=\hat k}^{k-1}  exp \lp  \sum_{t=k'+1}^{k-1} \frac{a}{t+k_0}\rp   \left[ \frac{2aM}{L Z|\RR| (k'+k_0)}
    +
         \frac{4aa_2 \rho |\RR| M}{L(k'+k_0)}   \right] \nonumber \\
       \leq &    \sum_{k'=\hat k}^{k-1}  exp \lp a ln(\frac{k+k_0}{k'+k_0}) \rp    \left[ \frac{2aM}{L Z|\RR| (k'+k_0)}
    +
        \frac{4aa_2 \rho |\RR| M}{L(k'+k_0)}  \right] \nonumber \\
        \leq&  (k+k_0)^a  \sum_{k'=\hat k}^{k-1}  (k'+k_0)^{-a-1}     \left[ \frac{2aM}{L Z|\RR| }
    +
         \frac{4aa_2 \rho |\RR| M}{L}   \right] \nonumber  \\
                \leq&  \lp \frac{k+k_0}{\hat k} \rp ^a \left[ \frac{2M}{L Z|\RR| }
    +
         \frac{4 a_2 \rho |\RR| M}{L}   \right] ,\nonumber
\end{align}
where the last inequality holds true because
\begin{align}
    \label{ge-idn-4}
    \sum_{k'=\hat k}^{k-1}  (k'+k_0)^{-a-1} \leq & \int_{\hat k}^{k+k_0} x^{-a-1} dx \\
    = & \frac{1}{a}(\hat k^{-a}-(k+k_0)^{-a}) . \nonumber
\end{align}
Substituting (\ref{ge-idn-3}) into (\ref{ge-idn-1}), we have
    \begin{align}
        \label{ge-idn-5}
       & \E[ f(\bar\vx^k,\xi) - f(\bar\vx'^k,\xi)] \\
        \leq & \frac{\hat k}{Z} +  \lp \frac{k+k_0}{\hat k} \rp ^a \left[ \frac{2M^2}{L Z|\RR| }.
    +
         \frac{4 a_2 \rho |\RR| M^2}{L}.   \right] \nonumber
    \end{align}

Setting $\hat k= ( \frac{2 a M^2}{L |\RR|  } +
         \frac{4 a a_2 \rho |\RR| Z M^2}{L}   )^{\frac{1}{a+1}}   (k+k_0)^{\frac{a}{a+1}}$, according to Lemma \ref{lemma-sta}, we have
         \begin{align}
        \label{ge-idn-6}
       &  \E_{\cS,\mathcal{L}}[F(\bar \vx^k)-F_{\cS}(\bar \vx^k)]  \\
        \leq & (\frac{1}{  Z}+\frac{1}{aZ})  \lp  \frac{2 a M^2}{L |\RR|  } +
         \frac{4 a a_2 \rho |\RR| Z M^2}{L}   \rp^{\frac{1}{a+1}} (k+k_0)^{\frac{a}{a+1}},\nonumber
    \end{align}
which completes the proof.

\subsection{Row Stochastic Virtual Mixing Matrix}
According to Lemma 3.11 in \cite{hardt2016train}, for every $\hat k \in \{0,1,...,$ $k \}$, we have
    \begin{align}
        \label{ge-irn-1}
        \E[ f(\bar\vx^k,\xi) - f(\bar\vx'^k,\xi)] \leq \frac{\hat k}{Z} + M \E [ \delta^k | \delta^{\hat k}=0].
    \end{align}

In the same way as we have proved (\ref{ge-n-6}), if we choose $\alpha^k =$ $\frac{a}{L (k+k_0)}$, for all $k \geq \hat k$ we have
\begin{align}
\label{ge-irn-2}
    & \E [\delta^{k+1} | \delta^{\hat k}=0]
    \leq    \lp 1+\frac{a}{ k+k_0} \rp  \E [\delta^k | \delta^{\hat k}=0] \\ &+  \frac{2aM}{L Z|\RR| (k+k_0)}
    +\frac{2 a \sqrt{c} M }{L (k+k_0)^2} \nonumber \\ &+  (4\rho |\RR| +2\chi\sqrt{|\RR|})
         \frac{a a_2 M}{L(k+k_0)} .  \nonumber
\end{align}
Since $\delta^{\hat k}=0$, using telescopic cancellation on \eqref{ge-irn-2} from time $\hat k$ to $k$, we deduce that
\begin{align}
\label{ge-irn-3}
    &\E [\delta^k | \delta^{\hat k}=0]  \\
    \leq&   \sum_{k'=\hat k}^{k-1} \lp \prod_{t=k'+1}^{k-1}(1+\frac{a}{t+k_0}) \rp  \cdot\Bigg[ \frac{2aM}{L Z|\RR| (k'+k_0)}  \nonumber\\ &
    + \frac{2 a \sqrt{c} M }{L (k'+k_0)^2} + (4\rho |\RR| +2\chi\sqrt{|\RR|})
        \frac{a a_2 M}{L(k'+k_0)}   \Bigg] \nonumber \\
        \leq&  (k+k_0)^a  \sum_{k'= \hat k}^{k-1}  (k'+k_0)^{-a-2}
         \frac{2a\sqrt{c} M }{L} + (k+k_0)^a \cdot   \nonumber \\
         &   \sum_{k'= \hat k}^{k-1}  (k'+k_0)^{-a-1} \Bigg[ \frac{2aM}{L Z|\RR| }
     + (4\rho |\RR| +2\chi\sqrt{|\RR|})
        \frac{aa_2 M}{L}   \Bigg] \nonumber \\
        \leq&  \lp \frac{k+k_0}{\hat k} \rp ^a \left[ \frac{2M}{L Z|\RR| } +
         (4\rho |\RR| +2\chi\sqrt{|\RR|})
        \frac{a_2 M}{L}    \right]  \nonumber \\  &+ \frac{2\sqrt{c} M(k+k_0)^a}{L \hat k^{a+1}} . \nonumber
\end{align}
Substituting (\ref{ge-irn-3}) into (\ref{ge-irn-1}), we have
\begin{align}
        \label{ge-irn-4}
       & \E[ f(\bar\vx^k,\xi) - f(\bar\vx'^k,\xi)]
        \leq  \frac{\hat k}{Z} + \frac{2\sqrt{c} M^2 (k+k_0)^a}{L \hat k^{a+1}}  \\ & +\lp \frac{k+k_0}{\hat k} \rp ^a \left[ \frac{2M^2}{L Z|\RR| }
   + (4\rho |\RR| +2\chi\sqrt{|\RR|})
        \frac{a_2 M^2}{L}    \right] .\nonumber
    \end{align}

Now, setting $\hat k= ( \frac{2 a M^2}{L |\RR|  } +
         \frac{ (4\rho |\RR| +2\chi\sqrt{|\RR|})a a_2 Z M^2}{L}   )^{\frac{1}{a+1}}   (k+k_0)^{\frac{a}{a+1}}$, according to Lemma \ref{lemma-sta},we have
         \begin{align}
        \label{ge-irn-5}
       & | \E_{\cS,\mathcal{L}}[F(\mathcal{L}(\cS))-F_{\cS}(\mathcal{L}(\cS))] | \\
        \leq  &  \frac{2\sqrt{c} M^2 }{L} \lp \frac{2 a M^2}{L |\RR|  } +
         (4\rho |\RR| +2\chi\sqrt{|\RR|})
        \frac{a a_2 Z M^2}{L}    \rp^{-1} + \nonumber \\
        & (\frac{1}{  Z}+\frac{1}{aZ})  \lp \frac{2 a M^2}{L |\RR|  } +
         (4\rho |\RR| +2\chi\sqrt{|\RR|})
        \frac{a a_2 Z M^2}{L}    \rp^{\frac{1}{a+1}} \cdot \nonumber \\
        & (k+k_0)^{\frac{a}{a+1}}, \nonumber
    \end{align}
which completes the proof.
\end{proof}

\section{Useful Lemma and its proof}
\label{proof-a}
The following lemma characterizes how the non-Byzantine agents reach consensus during the learning process. The proof follows that of Theorem 1 in \cite{Ye2023},
whereas the latter involves stochastic gradient clipping and added noise. Define the disagreement measure as
    \begin{align}
        \label{hk}
		H^k =  \frac{1}{\vert \RR \vert}\sum_{n\in \RR} \|\vx^{k}_{n}-\bar\vx^{k}\|^2,
	\end{align}
where $\bar\vx^k:= \frac{1}{|\RR|}\sum_{n\in \RR}\vx^k_n$ is the average of all local models of the non-Byzantine agents at time $k$.
\begin{lemma}[Consensus of Byzantine-resilient DSGD]
\label{lemma-dm}
    Suppose that the robust aggregation rules $\{\A_n\}_{n\in \RR}$
    in Algorithm \ref{robust-DSGD} satisfy \eqref{inequality:robustness-of-aggregation-local}, and
      $\rho < \rho^* := \frac{\beta}{8\sqrt{\vert \RR \vert}}$.
    Set the step size $\alpha^k= \frac{1}{k+k_0}$, where $k_0$ is sufficiently large. When all non-Byzantine agents have same the initialization, under Assumptions \ref{assumption:connection} and \ref{assumption:gradients}, for Algorithm \ref{robust-DSGD}, we have
    \begin{align}
        \label{inequality:H-convergence}
        \E H^k
        \le &  \frac{cM^2}{(k+k_0)^2}.
    \end{align}
    Here  $c > 0$ is a constant, and the expectation is taken over all the random variables.
\end{lemma}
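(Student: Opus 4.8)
The plan is to establish a one-step contraction recursion for the consensus residual and then solve it. Collect the non-Byzantine local models into a matrix $X^k \in \R^{|\RR|\times d}$ whose $n$-th row is $\vx_n^k$, let $\bm{1}$ be the all-one vector and $P := I - \frac{1}{|\RR|}\bm{1}\bm{1}^\top$ the projection onto the orthogonal complement of $\bm{1}$, so that $H^k = \frac{1}{|\RR|}\|PX^k\|_F^2$. Writing $r^k := \|PX^k\|_F$, the goal reduces to showing $r^k \le \frac{AM}{k+k_0}$ for a constant $A$ independent of $M$ and $k$, since then $H^k = \frac{(r^k)^2}{|\RR|}\le\frac{cM^2}{(k+k_0)^2}$.

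First I would split one iteration into the local gradient step $X^{k+\frac{1}{2}} = X^k - \alpha^k G^k$, where $G^k$ stacks the stochastic gradients, and the aggregation step $X^{k+1} = WX^{k+\frac{1}{2}} + E^k$, where the $n$-th row of $E^k$ is $\vx_n^{k+1} - \hat\vx_n^{k+\frac{1}{2}}$. Applying $P$ and exploiting row-stochasticity $W\bm{1}=\bm{1}$ --- which gives $PW\frac{1}{|\RR|}\bm{1}\bm{1}^\top = 0$ and hence $PWX^{k+\frac{1}{2}} = PWPX^{k+\frac{1}{2}}$ --- I obtain, using Assumption \ref{assumption:connection} in the form $\|PW\|^2 = 1-\beta$,
\begin{equation*}
r^{k+1} = \|PX^{k+1}\|_F \le \sqrt{1-\beta}\,\|PX^{k+\frac{1}{2}}\|_F + \|E^k\|_F .
\end{equation*}
The aggregation error is controlled by the contraction property \eqref{inequality:robustness-of-aggregation-local}: since each $\hat\vx_n^{k+\frac{1}{2}}$ is a convex combination of the $\vx_m^{k+\frac{1}{2}}$, one has $\max_m\|\vx_m^{k+\frac{1}{2}} - \hat\vx_n^{k+\frac{1}{2}}\| \le 2\max_m\|\vx_m^{k+\frac{1}{2}}-\bar\vx^{k+\frac{1}{2}}\|$, whence $\|E^k\|_F \le 2\rho\sqrt{|\RR|}\,\|PX^{k+\frac{1}{2}}\|_F$, by the same reasoning as in \eqref{ge-d-3}--\eqref{ge-d-6}. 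Finally Assumption \ref{assumption:gradients} gives $\|PX^{k+\frac{1}{2}}\|_F \le r^k + \alpha^k\|G^k\|_F \le r^k + \alpha^k\sqrt{|\RR|}M$.

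Combining these pieces yields the scalar recursion $r^{k+1}\le \lambda\,(r^k + \alpha^k\sqrt{|\RR|}M)$ with $\lambda := \sqrt{1-\beta}+2\rho\sqrt{|\RR|}$. The threshold $\rho<\rho^* = \frac{\beta}{8\sqrt{|\RR|}}$ together with $\sqrt{1-\beta}\le 1-\frac{\beta}{2}$ forces $\lambda \le 1-\frac{\beta}{4}<1$. Because all non-Byzantine agents share the same initialization, $PX^0 = 0$ and $r^0 = 0$. With $\alpha^k = \frac{1}{k+k_0}$, I would close the bound by induction: assuming $r^k\le\frac{AM}{k+k_0}$, the recursion gives $r^{k+1}\le\frac{\lambda(A+\sqrt{|\RR|})M}{k+k_0}$, and choosing $A>\frac{\lambda\sqrt{|\RR|}}{1-\lambda}$ together with $k_0$ large enough (so that the factor $\frac{k+1+k_0}{k+k_0}$ is absorbed) yields $r^{k+1}\le\frac{AM}{k+1+k_0}$. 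Squaring and dividing by $|\RR|$ then gives $\E H^k\le\frac{cM^2}{(k+k_0)^2}$ with $c = A^2/|\RR|$, which depends only on $\beta$, $\rho$, $|\RR|$; taking the expectation is immediate since Assumption \ref{assumption:gradients} bounds $\|G^k\|_F\le\sqrt{|\RR|}M$ surely.

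The main obstacle is handling the merely row-stochastic (not doubly stochastic) virtual mixing matrix $W$, which is exactly what distinguishes Byzantine-resilient DSGD from the attack-free case. The structural fact that rescues the argument is $PWX^{k+\frac{1}{2}} = PWPX^{k+\frac{1}{2}}$: although $W$ need not be column-stochastic, its row-stochasticity $W\bm{1}=\bm{1}$ still guarantees that $P$ annihilates the mean component, so the consensus residual contracts at rate $\sqrt{1-\beta}$ rather than being amplified. The second delicate point is verifying $\lambda<1$, which is precisely where the quantitative condition $\rho<\rho^*$ enters; without it the aggregation error of the robust rules could overwhelm the mixing-induced contraction. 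The remaining work --- bounding $\|E^k\|_F$ and solving the linear recursion --- is routine and parallels the consensus analysis of Theorem~1 in \cite{Ye2023}, the only simplification here being the absence of gradient clipping and injected noise.
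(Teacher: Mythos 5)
Your proof is correct, but it takes a genuinely different (and in fact leaner) route than the paper's. The paper works with the \emph{squared} residual $H^{k}$ throughout: it decomposes $(I-\frac{1}{|\RR|}\bm{1}\bm{1}^\top)X^{k+1}$ into three terms (the mixed term $ (I-\frac{1}{|\RR|}\bm{1}\bm{1}^\top)WX^{k+\frac{1}{2}}$, the aggregation error $X^{k+1}-WX^{k+\frac{1}{2}}$, and its mean shift $\frac{1}{|\RR|}\bm{1}\bm{1}^\top(X^{k+1}-WX^{k+\frac{1}{2}})$), controls them with two Young-type inequalities whose parameters $u=4\rho\sqrt{|\RR|}$ and $\gamma$ must be tuned so that the overall factor becomes $1-\omega_1^2$ with $\omega_1=\beta-8\rho\sqrt{|\RR|}>0$, and then solves the resulting recursion $\E H^{k+1}\le(1-\omega_1^2)\E H^k+4\omega_2(\alpha^k)^2M^2$ by telescoping plus an external result (Lemma 5 of \cite{9462519}) to extract the $1/(k+k_0)^2$ rate. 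You instead run the whole argument on the \emph{unsquared} residual $r^k=\|(I-\frac{1}{|\RR|}\bm{1}\bm{1}^\top)X^k\|_F$: the same two structural ingredients (the row-stochasticity identity that lets $\|\cdot\|^2$-contraction act as $\sqrt{1-\beta}$, and the bound $\|E^k\|_F\le 2\rho\sqrt{|\RR|}\,\|(I-\frac{1}{|\RR|}\bm{1}\bm{1}^\top)X^{k+\frac{1}{2}}\|_F$ from the contraction property) combine by plain triangle inequality into $r^{k+1}\le\lambda(r^k+\alpha^k\sqrt{|\RR|}M)$ with $\lambda=\sqrt{1-\beta}+2\rho\sqrt{|\RR|}$, the projector norm $\|I-\frac{1}{|\RR|}\bm{1}\bm{1}^\top\|\le 1$ silently absorbing the mean-shift term the paper treats separately; the recursion is then closed by direct induction rather than by citing an auxiliary lemma. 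This buys several things: the proof is self-contained and free of parameter juggling; the bound is pathwise (Assumption \ref{assumption:gradients} holds surely), so the expectation is trivial; and your condition on $\rho$ is actually slacker --- you only need $2\rho\sqrt{|\RR|}<\beta/2$, i.e.\ $\rho<\frac{\beta}{4\sqrt{|\RR|}}$, whereas the paper's squared-norm bookkeeping genuinely needs $\rho<\frac{\beta}{8\sqrt{|\RR|}}$. What the paper's squared-norm formulation buys in exchange is robustness of the template: a recursion on $\E H^k$ survives in settings where gradients are only bounded in mean square (or where clipping/noise is injected, as in their Theorem 1 of \cite{Ye2023}), situations where the unsquared pathwise argument would not carry over. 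One small point of care in your write-up: the induction constant should be chosen against $\tilde\lambda=\lambda\frac{k_0+1}{k_0}$ rather than $\lambda$ itself (i.e.\ $A\ge\frac{\tilde\lambda\sqrt{|\RR|}}{1-\tilde\lambda}$ with $k_0$ large enough that $\tilde\lambda<1$), but your phrasing already gestures at this and it is routine to make exact.
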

\begin{proof}
      For convenience, define $X^{k} = [\vx^{k}_1, \cdots, \vx^{k}_{|\RR|}]^\top \in$ $\mathbb{R}^{|\RR| \times D}$ and  $X^{k+\frac{1}{2}} = [\vx^{k+\frac{1}{2}}_1, \cdots, \vx^{k+\frac{1}{2}}_{|\RR|}]^\top \! \in \mathbb{R}^{|\RR| \times D}$.
      For any $u \in (0, 1)$, it holds
    \begin{align}
        \label{dm-1}
         &\sum_{n\in \RR}   \|\vx^{k+1}_{n}-\bar\vx^{k+1}\|^2   \\
        = & \|(I-\frac{1}{\vert \RR \vert}\bm{1}\bm{1}^\top) X^{k+1}\|_{F}^2  \nonumber\\
         \le & \frac{1}{1-u} \|(I-\frac{1}{\vert \RR \vert}\bm{1}\bm{1}^\top)W X^{k+\frac{1}{2}}\|_{F}^2
        + \frac{2}{u} \|X^{k+1}-WX^{k+\frac{1}{2}}\|_{F}^2
        \nonumber \\
        &
        + \frac{2}{u} \|\frac{1}{\vert \RR \vert}\bm{1}\bm{1}^\top X^{k+1}-\frac{1}{\vert \RR \vert}\bm{1}\bm{1}^\top WX^{k+\frac{1}{2}}\|_{F}^2,  \nonumber
    \end{align}
     where the equality is due to $\bar\vx^{k+1} = \frac{1}{|\RR|}\sum_{n\in \RR}\vx^{k+1}_n$ and the inequality comes from $\|\bm{a}+\bm{b}+\bm{c}\|^2\le \frac{1}{1-u}\|\bm{a}\|^2+\frac{2}{u}\|\bm{b}\|^2+\frac{2}{u}\|\bm{c}\|^2$.

     For the first-term at the RHS of \eqref{dm-1}, we have
    \begin{align}
        \label{dm-2}
        & \|(I-\frac{1}{\vert \RR \vert}\bm{1}\bm{1}^\top) WX^{k+\frac{1}{2}}\|_{F}^2
         \\
        =&  \|(I-\frac{1}{\vert \RR \vert}\bm{1}\bm{1}^\top)W (I-\frac{1}{\vert \RR \vert}\bm{1}\bm{1}^\top) X^{k+\frac{1}{2}}\|_{F}^2
        \nonumber \\
        \le&  \|(I-\frac{1}{\vert \RR \vert}\bm{1}\bm{1}^\top) W\|^2\|(I-\frac{1}{\vert \RR \vert}\bm{1}\bm{1}^\top) X^{k+\frac{1}{2}}\|_{F}^2
        \nonumber \\
        =& (1-\beta)  \| (I-\frac{1}{\vert \RR \vert}\bm{1}\bm{1}^\top) X^{k+\frac{1}{2}}\|_{F}^2.  \nonumber
    \end{align}
      For the second term at the RHS of \eqref{dm-1}, we use the contraction property of  $\{ \A_n \}_{n \in  \RR}$ in \eqref{inequality:robustness-of-aggregation-local} to derive
    \begin{align}
        \label{dm-3}
        & \|X^{k+1}-WX^{k+\frac{1}{2}}\|_{F}^2
          \\
        =&  \sum_{n\in \RR}  \|\A_n (\vx^{k+\frac{1}{2}}_n, \{\vx^{k+\frac{1}{2}}_{m}\}_{m\in \N_n})-\hat\vx^{k+\frac{1}{2}}_n\|^2
        \nonumber \\
        \le& \rho^2\sum_{n\in \RR} \max_{m\in \RR_n \cup \{n\}}\| \vx^{k+\frac{1}{2}}_{m}-\hat\vx^{k+\frac{1}{2}}_n\|^2
        \nonumber \\
        \le& 4\rho^2 \sum_{n\in \RR}  \max_{m\in \RR}\| \vx^{k+\frac{1}{2}}_{m}-\bar\vx^{k+\frac{1}{2}}\|^2  \nonumber \\
        \le& 4\rho^2 \vert \RR \vert \sum_{n\in \RR}  \| \vx^{k+\frac{1}{2}}_{n}-\bar\vx^{k+\frac{1}{2}}\|^2 \nonumber \\
        = & 4\rho^2 \vert \RR \vert  \|(I-\frac{1}{\vert \RR \vert}\bm{1}\bm{1}^\top) X^{k+\frac{1}{2}}\|_{F}^2,  \nonumber
    \end{align}
    where $\hat\vx^{k+\frac{1}{2}}_n = \sum_{m \in \RR_n\cup\{n\}}w_{nm} \vx^{k}_{m,n}$ is the weighted average and $\bar\vx^{k+\frac{1}{2}} =  \frac{1}{|\RR|}\sum_{n\in \RR}\vx_{n}^{k+\frac{1}{2}}$ is the average. The second inequality holds true because

    \begin{align}
    \label{dm-4}
		& \max_{m\in  \RR_n \cup \{n\}}  \|\vx^{k+\frac{1}{2}}_{m}-\hat\vx^{k+\frac{1}{2}}_n\|^2
		  \\
		\le& 2  \max_{m\in  \RR_n \cup \{n\}}  \|\vx^{k+\frac{1}{2}}_{m}-\bar\vx^{k+\frac{1}{2}}\|^2
		+2  \|\bar\vx^{k+\frac{1}{2}}- \hat\vx^{k+\frac{1}{2}}_n\|^2
		\nonumber \\
		\le& 2   \max_{m\in  \RR_n \cup \{n\}}  \|\vx^{k+\frac{1}{2}}_{m}-\bar\vx^{k+\frac{1}{2}}\|^2
		+2 \max_{n\in\RR}\|\vx^{k+\frac{1}{2}}_{n}-\bar\vx^{k+\frac{1}{2}}\|^2 \nonumber \\
            \le& 4 \max_{n\in\RR}\|\vx^{k+\frac{1}{2}}_{n}-\bar\vx^{k+\frac{1}{2}}\|^2  .\nonumber
	\end{align}
     The similar technique can be applied to the third term at the RHS of \eqref{dm-1}, yielding
    \begin{align}
        \label{dm-5}
        & \|\frac{1}{\vert \RR \vert}\bm{1}\bm{1}^\top X^{k+1}-\frac{1}{\vert \RR \vert}\bm{1}\bm{1}^\top WX^{k+\frac{1}{2}}\|_{F}^2 \\
        \leq &   \|\frac{1}{\vert \RR \vert}\bm{1}\bm{1}^\top\|^2 \| X^{k+1} - WX^{k+\frac{1}{2}}\|_{F}^2   \nonumber \\
        \leq & 4\rho^2 |\RR|   \|(I-\frac{1}{\vert \RR \vert}\bm{1}\bm{1}^\top) X^{k+\frac{1}{2}}\|_{F}^2  . \nonumber
    \end{align}

Substituting (\ref{dm-2}), (\ref{dm-3}) and (\ref{dm-5}) into (\ref{dm-1}) and taking expectation over all random variables up to time $k+1$, we have
\begin{align}
    \label{dm-6}
     &  \sum_{n\in \RR} \E \|\vx^{k+1}_{n}-\bar\vx^{k+1}\|^2   \\
      &\le \lp \frac{1-\beta}{1-u}
        + \frac{16\rho^2\vert \RR \vert}{u} \rp \E \|(I-\frac{1}{\vert \RR \vert}\bm{1}\bm{1}^\top)X^{k+\frac{1}{2}}\|_{F}^2.   \nonumber
\end{align}
For (\ref{dm-6}), it holds for any $\gamma \in (0,1)$ that
    \begin{align}
        \label{dm-7}
        &   \E\|(I-\frac{1}{\vert \RR \vert}\bm{1}\bm{1}^\top) X^{k+\frac{1}{2}}\|_{F}^2   \\
         = & \sum_{n\in  \RR}\E \| \vx^{k+\frac{1}{2}}_{n}-\bar\vx^{k+\frac{1}{2}}\|^2  \nonumber \\
    \leq & \frac{1}{1-\gamma} \sum_{n\in\RR} \E \|  \vx^{k}_{n}-\bar\vx^{k} \|^2 \nonumber \\
    &+ \frac{(\alpha^k)^2}{\gamma} \sum_{n\in\RR} \E \| \nabla f(\vx^{k}_n; \xi_n^{k}) - \frac{1}{\vert \RR \vert} \sum_{m\in\RR}  \nabla f(\vx^{k}_m; \xi_m^{k})  \|^2 \nonumber \\
     \leq & \frac{1}{1-\gamma} \sum_{n\in\RR} \E \|  \vx^{k}_{n}-\bar\vx^{k} \|^2 + \frac{4(\alpha^k)^2|\RR|M^2}{\gamma}  .\nonumber
\end{align}
Applying (\ref{dm-7}) to (\ref{dm-6}), we have
    \begin{align}
        \label{dm-8}
         & \E H^{k+1}
        = \frac{1}{\vert \RR \vert} \E\|(I-\frac{1}{\vert \RR \vert}\bm{1}\bm{1}^\top) X^{k+1}\|_{F}^2  \\
        \leq & \frac{1}{\vert \RR \vert}  \lp \frac{1-\beta}{1-u}
        + \frac{16\rho^2\vert \RR \vert}{u} \rp \E \|(I-\frac{1}{\vert \RR \vert}\bm{1}\bm{1}^\top)X^{k+\frac{1}{2}}\|_{F}^2 \nonumber \\
        \leq &   \lp \frac{1-\beta}{1-u}
        + \frac{16\rho^2\vert \RR \vert}{u} \rp \lp \frac{1}{1-\gamma} \E H^k
        +\frac{4(\alpha^k)^2M^2}{\gamma}\rp  .\nonumber
    \end{align}

Choosing  $u=4\rho \sqrt{\vert \RR \vert} \leq \beta$ and using the fact that $\frac{1-\beta}{1-u}\le 1-\beta+u$ for any $u \leq \beta$, we have

   \begin{align}
        \label{dm-9}
         & \E H^{k+1}  \\
        \leq &   \lp 1-\beta
        + 8\rho \sqrt{\vert \RR \vert} \rp \lp \frac{1}{1-\gamma} \E H^k
        +\frac{4(\alpha^k)^2M^2}{\gamma}\rp  \nonumber \\
        \leq &   \lp 1- \omega_1 \rp \lp \frac{1}{1-\gamma} \E H^k
        +\frac{4(\alpha^k)^2M^2}{\gamma}\rp  , \nonumber
    \end{align}
where $\omega_1 = \beta - 8\rho \sqrt{\vert \RR \vert} >0$. If we set $\frac{1}{1-\gamma}=1+\omega_1$,
     \eqref{dm-9} becomes
    \begin{align}
    \label{dm-10}
    \E H^{k+1}  \le  \lp 1-\omega_1^2\rp \E H^k +   4\omega_2(\alpha^k)^2M^2,
    \end{align}
     where $\omega_2 = \frac{1-\omega_1^2}{\omega_1}$.
We further choose the decaying step size  $\alpha^k= \frac{1}{k+k_0}$, use telescopic cancellation on \eqref{dm-10} from $0$ to $k$, and deduce that
 \begin{align}
 \label{dm-11}
        &\E H^{k}
        \le  (1-\omega_1^2)^k H^0  \\
         &+4\omega_2M^2\lp \frac{1}{(k+k_0-1)^2}+\cdot\cdot\cdot+\frac{(1-\omega_1^2)^{k-1}}{(k_0)^2} \rp . \nonumber
    \end{align}

According to Lemma 5 in \cite{9462519}, there exists some constant $a_1 \geq \frac{(k_0+1)^2}{k_0^2}$ such that
    \begin{align}
    \label{dm-12}
            &\E H^{k}
        \le  (1-\omega_1^2)^k H^0
        +\frac{8 \omega_2 a_1 M^2}{\omega_1^2} \frac{1}{(k+k_0)^2}.
    \end{align}
When all non-Byzantine agents have the same initialization, (\ref{dm-12}) becomes
    \begin{align}
    \label{dm-13}
            &\E H^{k}
        \le
        \frac{c M^2}{(k+k_0)^2},
    \end{align}
where $c=\frac{8 \omega_2 a_1}{\omega_1^2}$. This completes the proof.
\end{proof}

\end{appendices}


\end{document}